\documentclass[runningheads]{llncs}
\usepackage[utf8]{inputenc} 
\usepackage[T1]{fontenc}    
\usepackage[breaklinks]{hyperref}
\hypersetup{
colorlinks = true, 
linkcolor=blue, 
citecolor=blue, 
urlcolor=blue 
}
\usepackage{url}            
\usepackage{booktabs}       
\usepackage{amsfonts}       
\usepackage{nicefrac}       
\usepackage{microtype}      
\usepackage{xcolor}         

\usepackage{graphicx}
\usepackage{multirow}
\usepackage{amsmath}
\usepackage{xparse}
\usepackage{xspace}
\usepackage{subcaption}
\usepackage{paralist}
\usepackage{wrapfig}
\usepackage[linesnumbered,ruled,vlined]{algorithm2e}
\usepackage{comment}
\usepackage{orcidlink}
\usepackage[misc]{ifsym}

\newcommand{\R}{\ensuremath{\mathbb{R}}\xspace}
\newcommand{\true}{\ensuremath{\mathsf{true}}\xspace}
\newcommand{\false}{\ensuremath{\mathsf{false}}\xspace}
\newcommand{\transformerSymb}{\ensuremath{f}\xspace}
\newcommand{\transformer}{\ensuremath{\transformerSymb}\xspace}
\newcommand{\transformerSubOne}{\ensuremath{\transformerSymb_{\textsc{Sub1}}}\xspace}
\newcommand{\transformerSubTwo}{\ensuremath{\transformerSymb_{\textsc{Sub2}}}\xspace}
\newcommand{\modelLayer}{\ensuremath{N}\xspace}
\newcommand{\highX}{\ensuremath{X}\xspace}
\newcommand{\highXEle}[1]{\ensuremath{X_{\mathit{#1}}}\xspace}
\newcommand{\highY}{\ensuremath{Y}\xspace}
\newcommand{\highYEle}[1]{\ensuremath{Y_{{#1}}}\xspace}
\newcommand{\highXPerturb}
{\ensuremath{X'}\xspace}
\newcommand{\highXPerturbEle}[1]
{\ensuremath{X'_{\mathit{#1}}}\xspace}
\newcommand{\perturbIdx}{\ensuremath{d}\xspace}
\newcommand{\perturbIdxSet}{\ensuremath{D}\xspace}

\newcommand{\fattention}{\ensuremath{f_{\textsc{Attn}}}\xspace}
\newcommand{\faddnormone}{\ensuremath{f_{\mathrm{AN1}}}\xspace}
\newcommand{\faddnormtwo}{\ensuremath{f_{\mathrm{AN2}}}\xspace}
\newcommand{\ffnn}{\ensuremath{f_{\textsc{FNN}}}\xspace}
\newcommand{\fpool}{\ensuremath{f_{\textsc{Pool}}}\xspace}
\newcommand{\Qmat}{\ensuremath{Q}\xspace}
\newcommand{\Kmat}{\ensuremath{K}\xspace}
\newcommand{\Vmat}{\ensuremath{V}\xspace}
\newcommand{\QWmatrix}{\ensuremath{W_Q}\xspace}
\newcommand{\KWmatrix}{\ensuremath{W_K}\xspace}
\newcommand{\VWmatrix}{\ensuremath{W_V}\xspace}
\newcommand{\transpose}[1]{\ensuremath{#1^\mathrm{T}}\xspace}
\newcommand{\softmax}{\ensuremath{\mathrm{softmax}}\xspace}
\newcommand{\relu}{\ensuremath{\mathrm{ReLU}}\xspace}
\newcommand{\numClass}{\ensuremath{c}\xspace}
\newcommand{\numInputFirst}{\ensuremath{n}\xspace}
\newcommand{\numInputSecond}{\ensuremath{m}\xspace}

\newcommand{\inSpec}{\ensuremath{\Phi}\xspace}
\newcommand{\outSpec}{\ensuremath{\Psi}\xspace}
\newcommand{\Defeq}{\ensuremath{:=}\xspace}

\newcommand{\alphaU}{\ensuremath{\alpha^\mathrm{U}}\xspace}
\newcommand{\alphaL}{\ensuremath{\alpha^\mathrm{L}}\xspace}
\newcommand{\betaU}{\ensuremath{\beta^\mathrm{U}}\xspace}
\newcommand{\betaL}{\ensuremath{\beta^\mathrm{L}}\xspace}
\newcommand{\gammaU}{\ensuremath{\gamma^\mathrm{U}}\xspace}
\newcommand{\gammaL}{\ensuremath{\gamma^\mathrm{L}}\xspace}
\newcommand{\fqk}{\ensuremath{f_{\mathsf{qk}}}\xspace}
\newcommand{\qkUpperOne}[1]{\ensuremath{\fqk^{\mathrm{U_1}}(#1)}\xspace}
\newcommand{\qkLowerOne}[1]{\ensuremath{\fqk^{\mathrm{L_1}}(#1)}\xspace}
\newcommand{\qkUpperTwo}[1]{\ensuremath{\fqk^{\mathrm{U_2}}(#1)}\xspace}
\newcommand{\qkLowerTwo}[1]{\ensuremath{\fqk^{\mathrm{L_2}}(#1)}\xspace}
\newcommand{\qkUpperNonlin}[1]{\ensuremath{\fqk^{\mathrm{U}}(#1)}\xspace}
\newcommand{\qkLowerNonlin}[1]{\ensuremath{\fqk^{\mathrm{L}}(#1)}\xspace}
\newcommand{\qkUpperLin}[1]{\ensuremath{\fqk^{\mathrm{U^*}}(#1)}\xspace}
\newcommand{\qkLowerLin}[1]{\ensuremath{\fqk^{\mathrm{L^*}}(#1)}\xspace}
\newcommand{\qUpper}{\ensuremath{q^{\mathrm{U}}}\xspace}
\newcommand{\qLower}{\ensuremath{q^{\mathrm{L}}}\xspace}
\newcommand{\kUpper}{\ensuremath{k^{\mathrm{U}}}\xspace}
\newcommand{\kLower}{\ensuremath{k^{\mathrm{L}}}\xspace}
\newcommand{\qEle}{\ensuremath{\Qmat_{\mathit{ih}}}\xspace}
\newcommand{\kEle}{\ensuremath{\Kmat_{\mathit{jh}}}\xspace}
\newcommand{\qEleU}{\ensuremath{\Qmat_{\mathit{ih}}^{\mathrm{U}}}\xspace}
\newcommand{\kEleU}{\ensuremath{\Kmat_{\mathit{jh}}^{\mathrm{U}}}\xspace}
\newcommand{\qEleL}{\ensuremath{\Qmat_{\mathit{ih}}^{\mathrm{L}}}\xspace}
\newcommand{\kEleL}{\ensuremath{\Kmat_{\mathit{jh}}^{\mathrm{L}}}\xspace}
\newcommand{\frelulow}{\ensuremath{f_{\mathrm{relu}}^{\mathrm{L}}}\xspace}

\newcommand{\reluInLow}{\ensuremath{\mathit{l}}\xspace}
\newcommand{\reluInUpp}{\ensuremath{\mathit{u}}\xspace}
\newcommand{\margin}{\ensuremath{\mathsf{Margin}}\xspace}
\newcommand{\loss}{\ensuremath{\mathcal{L}}\xspace}
\newcommand{\marginLow}{\ensuremath{\underline{\margin}}\xspace}

\DeclareDocumentCommand{\norm}{o m}{%
  \ensuremath{\left\lVert #2 \right\rVert_{#1}}%
}
\newcommand{\yelp}{\texttt{Yelp}\xspace}
\newcommand{\sst}{\texttt{SST}\xspace}
\newcommand{\crownBAF}{\texttt{CrownBaF}\xspace}

\newcommand{\globalUpper}[2]{\ensuremath{\transformerSymb^\mathrm{U}_{#1,#2}}\xspace}
\newcommand{\globalLower}[2]{\ensuremath{\transformerSymb^\mathrm{L}_{#1,#2}}\xspace}
\newcommand{\neuron}[3]{\ensuremath{\Phi^{(#1)}_{#2,#3}(X)}\xspace}
\newcommand{\neuronvec}[2]{\ensuremath{\Phi^{(#1)}_{#2,:}(X)}\xspace}
\newcommand{\neuronw}[1]{\ensuremath{\Theta^{(#1)}}\xspace}
\newcommand{\neuronb}[2]{\ensuremath{\Delta^{(#1)}_{#2,:}}\xspace}
\newcommand{\ablinearw}[2]{\ensuremath{w^{(#1,#2)}}\xspace}
\newcommand{\ablinearb}[3]{\ensuremath{b^{(#1,#2)}_{#3,:}}\xspace}
\newcommand{\relualLower}[4]{\ensuremath{w^{(#1,#2),L}_{#3,#4}}\xspace}
\newcommand{\relubeLower}[4]{\ensuremath{b^{(#1,#2),L}_{#3,#4}}\xspace}
\newcommand{\relualUpper}[4]{\ensuremath{w^{(#1,#2),U}_{#3,#4}}\xspace}
\newcommand{\relubeUpper}[4]{\ensuremath{b^{(#1,#2),U}_{#3,#4}}\xspace}
\newcommand{\qkvweight}[1]{\ensuremath{\Theta^{#1}}\xspace}
\newcommand{\qkvbias}[1]{\ensuremath{\Delta^{#1}}\xspace}
\newcommand{\qkvweightEle}[3]{\ensuremath{\Theta^{#1}_{#2,#3}}\xspace}
\newcommand{\qkvbiasEle}[3]{\ensuremath{\Delta^{#1}_{#2,#3}}\xspace}

\newcommand{\tool}{\ensuremath{\mathsf{BuFFeT}}\xspace}
\newcommand{\toolp}{r-\tool}
\newcommand{\toola}{o-\tool}

\newcounter{researchquestionCount}
\newcommand{\researchquestion}[1]{\stepcounter{researchquestionCount}\vspace{5pt}\noindent\parbox{0.97\textwidth}{{\bf RQ\arabic{researchquestionCount}} {\it #1}}\vspace{1pt}}

\makeatletter
\def\orcidID#1{\kern .08em\href{https://orcid.org/#1}{\includegraphics[keepaspectratio,width=0.9em]{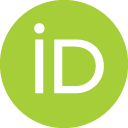}}}
\makeatother

\begin{document}

\title{Precise Verification of Transformers through ReLU-Catalyzed Abstraction Refinement}
%
%
\author{Hengjie Liu\inst{1}\orcidlink{0009-0005-5820-1168} \and
Zhenya Zhang\inst{1,2}$^{\text{(\Letter)}}$\orcidlink{0000-0002-3854-9846}  \and
Jianjun Zhao\inst{1}\orcidlink{0000-0001-8083-4352}}

\authorrunning{H. Liu et al.}

%

\institute{Kyushu University, Fukuoka, Japan \\ \email{liu.hengjie.142@s.kyushu-u.ac.jp},
\email{\{zhang, zhao\}@ait.kyushu-u.ac.jp} \and National Institute of Informatics, Tokyo, Japan}
\maketitle              

\begin{abstract}
Formal verification of transformers has become increasingly important due to their widespread deployment in safety-critical applications. Compared to classic neural networks, the inferences of transformers involve highly complex computations, such as dot products in self-attention layers, rendering their verification extremely difficult. Existing approaches explored over-approximation methods by constructing convex constraints to bound the output ranges of transformers, which can achieve high efficiency. However, they may sacrifice verification precision, and consequently introduce significant approximation error that leads to frequent occurrences of false alarms. 
In this paper, we propose a transformer verification approach that can achieve improved precision. At the core of our approach is a novel usage of ReLU, by which we represent a precise but non-linear bound for dot products such that we can further exploit the rich body of literature for convex relaxation of ReLU to derive precise bounds. We extend two classic approaches to the context of transformers, a rule-based one and an optimization-based one, resulting in two new frameworks for efficient and precise verification.
We evaluate our approaches on different model architectures and robustness properties derived from two datasets about sentiment analysis, and compare with the state-of-the-art baseline approach. Compared to the baseline, our approach can achieve significant precision improvement for most of the verification tasks with acceptable compromise of efficiency, which demonstrates the effectiveness of our approach. 
\keywords{Formal verification \and Transformers \and Robustness \and ReLU}
\end{abstract}
\section{Introduction}
\label{sec:introduction}
Over the past decade, the rapid development of deep learning has driven unprecedented breakthroughs in different application domains. As a ground-breaking milestone, transformers~\cite{NIPS2017_3f5ee243}, empowered by the \emph{self-attention} mechanism, have brought revolutionary performance advances to multiple fields, such as computer vision and natural language processing (NLP). However, similar to other types of neural networks, transformers are notoriously vulnerable to adversarial perturbations, which brings significant doubts on their trustworthiness. For instance, \emph{synonym substitution}~\cite{samanta2017craftingtextadversarialsamples,alzantot-etal-2018-generating} has proved to be an effective attack method in NLP that can fool the model by replacing some words of an input sentence with their synonyms. Given the increasing deployment of transformers in safety-critical applications (e.g., autonomous driving, healthcare), ensuring their robustness in noisy and sophisticated environments has become an urgent demand.

Verification is a preferred approach that can formally prove the safety and robustness of systems, thereby alleviating the concerns about their misbehaviors. In the context of neural networks, it has been extensively studied (see a survey~\cite{liu2021algorithms} and the competition report~\cite{brix2023first}) with well-established methodological foundations. The challenge mainly involves how to deal with their \emph{non-linear components}, e.g., activation functions in classic feed-forward neural networks. Due to this, computing an exact bound of neural network outputs often incurs exponential time complexity and thus is not scalable to real-world models. By contrast, \emph{abstract interpretation} is an approach that constructs convex \emph{over-approximation} of their outputs and assesses whether the approximated outputs satisfy the desired property---as long as the over-approximated outputs satisfy the property, the original outputs must also satisfy the property, thereby verifying the neural networks. Owning to the convexity of the approximated output bounds, these approaches are much more efficient and thus preferable in practice.

\smallskip\noindent\textit{Overview of Related Work} Despite the prosperity of research in classic neural network verification, verification of \emph{transformers} still remains at a very early stage. Nevertheless, standing on the shoulders of classic verification, the research community get straight to abstract interpretation~\cite{Shi2020Robustness,bonaert2021fast,zhang2024galileo}, which is more scalable to transformers of large sizes. For example, as the state-of-the-art, Shi et al.~\cite{Shi2020Robustness} extend the idea of linear relaxation from classic neural network verification, and devise different linear bounds for different non-linear components of  transformers, such as the dot product operations in self-attention layers. Compared to the verification approaches that bound transformer outputs exactly (e.g., \cite{liao2023transformers}),  computation of convex over-approximation often requires polynomial time complexity, so they are much faster and more scalable, and therefore they are feasible to be used to handle complex transformers in real world.

\smallskip\noindent\textit{Completeness Issues of Abstract Interpretation} As an intrinsic issue of abstract interpretation, if the over-approximated outputs violate the given property, it does not imply that the original outputs also violate the property, and in this case, the reported violation (i.e., the violation by the over-approximated outputs) could be a \emph{false alarm}. This is known as the \emph{completeness issue}~\cite{liu2021algorithms}, and essentially, it arises from the error introduced during the approximation of the transformers' non-linear components. In particular, we show that existing over-approximation approaches are often \emph{suboptimal} and thus may introduce significant approximation error and bring many false alarms. Given that verification is typically expensive, frequent occurrences of false alarms can lead to substantial waste of time and resources, highlighting the need for more precise approximation methods.

\smallskip\noindent\textit{Key Idea of Abstraction Refinement} While there are various non-linear operations in transformers, dot products happen most frequently in matrix multiplications of self-attention layers and pose unique challenges that do not exist in classic neural network verification. In~\cite{Shi2020Robustness}, the main contribution involves the design of a provably correct planar bound for dot products, which is given in terms of a rigid strategy that always covers one side of the hyperbolic paraboloid derived from the dot product (see Fig.~\ref{fig:surface2} for illustration).
While it is a sound choice, it is not the unique one, e.g., there can be a dual bound that covers another side of the hyperbolic paraboloid (see Fig.~\ref{fig:surface3} for illustration).

However, none of the two planar bounds, including the one in~\cite{Shi2020Robustness} and the dual bound, is the optimal under all different situations. To refine the approximation, we need to devise new approaches that fuse the two planar bounds, meanwhile holding the linearity of the bounds to maintain verification efficiency.

\smallskip\noindent\textit{Contributions}
In this paper, we explore a novel usage of ReLU functions, by which we propose \tool, standing for \textbf{B}o\textbf{U}nd \textbf{F}using-based re\textbf{F}in\textbf{E}ment for \textbf{T}ransformers, which fuses the planar bounds of dot product operations and represents them with the assistance of ReLU functions. The advantage of doing this is that, we can then leverage the relaxed linear bounds of the ReLU-based representations, by exploiting the rich body of existing literature in classic neural network verification, in which linear relaxation of ReLUs has been extensively studied and a variety of bound-tightening techniques have been developed. Consequently, we can derive tighter bounds for dot products---the fundamental operations in self-attention layers---which in turn leads to tighter bounds on transformer outputs. Moreover, based on our representation, we also show that the bounds proposed in~\cite{Shi2020Robustness} and its dual case, are two specific instantiations that can be derived by our representation, and we explain why they are suboptimal. 

Furthermore, we leverage the insights from two existing strategies from classic neural network verification to derive improved approximation. The first strategy, adapted from~\cite{singh2019abstract}, is simple but effective, which, based on the input ranges of ReLUs, selects the optimal planar bound to refine the approximation. The second strategy, inspired by~\cite{xu2020fast}, is more complex---it formalizes the refinement of approximation as an optimization problem and searches in the space of ReLUs' bounds to find the optimal solution. Thanks to this formalization, we can call modern optimization solvers, such as Adam, to iteratively try different bounds of ReLUs and refine the verification results. Then, as soon as it finds the bounds that suffice to certify the robustness of transformers, verification can be terminated.

We evaluate our approaches on transformer classifiers and TinyBERT~\cite{jiao-etal-2020-tinybert} trained with datasets about sentiment analysis. In total, we experiment on 10 transformer-based models with different complexities. 
We evaluate the robustness of each model with 140 verification tasks, resulting in 1400 verification tasks totally in our evaluation. We select the state-of-the-art approach~\cite{Shi2020Robustness} as our baseline and compare our approach with it in terms of verification precision and efficiency. By our  results, we observe that, for most of the verification tasks, our approaches, especially \toola, demonstrate evident precision improvement, which significantly enhances the reliability of verification results. 

\smallskip\noindent\textit{Impact}
Notably, this work not only delivers two effective verification approaches, but more importantly, by the use of ReLU, it bridges  verification methodologies between transformers and classic neural networks, potentially allowing more established analysis methods for ReLUs to be applied in transformer verification. In this regard, it paves the path to future research in transformer verification.

\section{Preliminaries}
\label{sec:background}

\subsection{Notations}\label{sec:notations}
In this paper, we adopt the notations as follows. Given $X\in \R^{n\times m}$ as a matrix, we use $X_{ij}\in \R$ (where $i\in \{1,\ldots, n\}$, $j\in \{1,\ldots,m\}$) to denote the element of $X$ with index $(i, j)$. We use $X_{i,:}\in \R^m$ to denote the $i$-th row of $X$, which is an $m$-dimensional vector. Moreover, if $Y\in \R^m$ is an $m$-dimensional vector, we use $Y_i\in\R$ to denote the element of $Y$ at its $i$-th dimension.

\subsection{Transformer Verification Problem}\label{sec:problemStatement}
\begin{wrapfigure}[13]{r}{0.5\textwidth}
\centering
\vspace{-2.5em}
\includegraphics[width=0.75\linewidth, alt={Transformer architecture}]{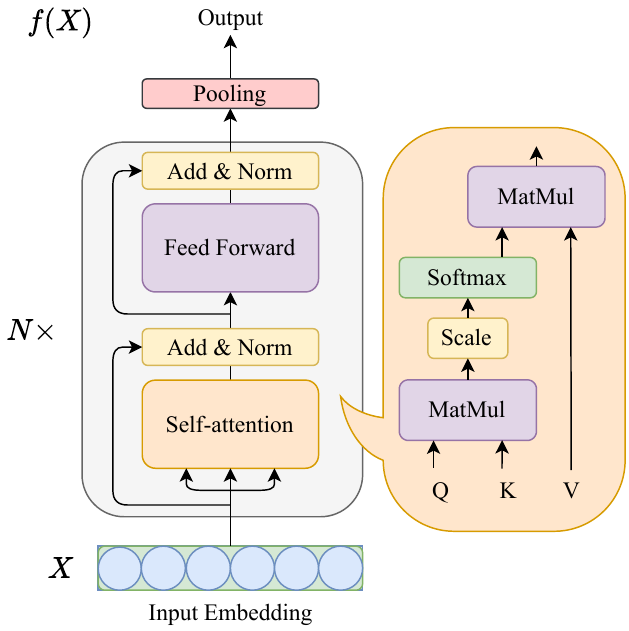}
\caption{Transformer architecture} \label{fig:architecture}
\end{wrapfigure}
%
\textit{Self-Attentive Transformers}
Transformers~\cite{NIPS2017_3f5ee243} feature the adoption of \emph{self-attention} mechanism, and have been adopted in various domains such as computer vision and natural language processing. 
While transformers typically consist of both encoders and decoders, in this paper we follow~\cite{Shi2020Robustness} and consider encoders only, which is adopted in architectures such as BERT~\cite{devlin-etal-2019-bert} and can be used for classification tasks in applications such as sentiment analysis. Due to limited space, we leave more detailed introduction to transformer architectures in \S{}\ref{sec:detailedTransformer} and here focus on the formal representations of transformer inferences.  

As visualized in Fig.~\ref{fig:architecture}, an $\modelLayer$-layer transformer is a function $\transformer\colon \R^{\numInputFirst\times \numInputSecond}\mapsto \R^{\numClass}$ that maps an input $\highX\in \R^{\numInputFirst\times \numInputSecond}$ to an output $\transformer(\highX)\in \R^{\numClass}$, as follows:
\begin{equation}\label{eq:transformer}
\begin{aligned}
\transformerSubOne(\highX) &\Defeq \faddnormone(\fattention(\highX\QWmatrix,\highX\KWmatrix,\highX\VWmatrix), \highX), \\
\transformerSubTwo(\highY) &\Defeq \faddnormtwo(\ffnn(\highY), \highY), \\
\transformer(\highX) &\Defeq \fpool(\underbrace{\transformerSubTwo\circ\transformerSubOne(\cdots(\transformerSubTwo\circ\transformerSubOne(\highX)))}_{\text{calling  $\transformerSubTwo\circ\transformerSubOne(\highX)$ for \modelLayer times}}) 
\end{aligned}
\end{equation}
The transformer first goes through \transformerSubOne, which includes the function $\fattention\colon \R^{\numInputFirst\times\numInputSecond}\times \R^{\numInputFirst\times\numInputSecond}\times \R^{\numInputFirst\times\numInputSecond} \mapsto \R^{\numInputFirst\times\numInputSecond}$  that represents the computations in self-attention layers. It takes as input three matrices \Qmat, \Kmat and \Vmat, each computed by multiplying the input \highX with a matrix (i.e., \QWmatrix, \KWmatrix, or \VWmatrix) of tuned parameters, and computes the output matrix $\fattention(\Qmat, \Kmat, \Vmat)\in \R^{\numInputFirst\times \numInputSecond}$ as follows: 
\begin{align}\label{eq:attention}
\fattention(\Qmat,\Kmat,\Vmat)\Defeq\softmax\left(\frac{\Qmat\transpose{\Kmat}}{\sqrt{d_{k}}}\right)\Vmat
\end{align}
where \softmax is a normalization function and $\sqrt{d_k}$ is a pre-defined rescaling factor. The output $\fattention(\Qmat, \Kmat, \Vmat)$ then goes through $\faddnormone\colon \R^{\numInputFirst\times\numInputSecond}\times \R^{\numInputFirst\times\numInputSecond} \mapsto \R^{\numInputFirst\times\numInputSecond}$ for a residual addition and normalization operation, deriving the result of $\transformerSubOne(\highX)$. 

Then, $\transformerSubOne(\highX)$ is taken as the input of \transformerSubTwo, and goes through  \ffnn that involves two fully-connected  layers and acts as a feed-forward neural network, each layer consisting of an affine transformation and a \relu activation function. After this, the output of \ffnn goes through another add\&norm layer \faddnormtwo to derive the output of \transformerSubTwo. In the whole transformer that contains $\modelLayer$ encoder layers, $\transformerSubTwo\circ\transformerSubOne(\highX)$ is called iteratively for $\modelLayer$ times, and then its output goes through a pooling layer \fpool that aggregates the information in the resulting matrix to a vector $\transformer(\highX)\in \R^\numClass$ as the final output of the transformer. 

\smallskip\noindent\textit{Robustness Verification Problem} Robustness refers to the behavioral consistency of transformers under adversarial perturbations and has been widely recognized as an essential property for their reliability. Consider a natural language processing task where $\highX$ represents a given sentence and each row $\highXEle{i,:}$ of \highX represents a word. We define a predicate  $\inSpec\colon \R^{\numInputFirst\times\numInputSecond}\mapsto\{\true, \false\}$ to identify the valid space of the perturbed input \highXPerturb, as follows:
\begin{math}
   \inSpec(\highXPerturb) \Defeq  \forall \perturbIdx\in \perturbIdxSet.\left(\norm[p]{\highXPerturbEle{d,:} - \highXEle{d,:}} \le \epsilon \right)
\end{math},
where $\perturbIdxSet\subseteq \{1,\ldots, \numInputFirst\}$ is a subset of the words selected to add perturbations. The predicate \inSpec imposes the following requirement to the perturbed input \highXPerturb: for each perturbed word $\highXPerturbEle{d,:}$, the distance from the original word $\highXEle{d,:}$ should be less than a threshold $\epsilon$ under $L_p$-norm.  

The robustness property requires that, for any perturbed input $\highXPerturb$ that holds $\inSpec(\highXPerturb)$, the output $\transformer(\highXPerturb)$ of the transformer should satisfy $\outSpec(\transformer(\highXPerturb))$, where $\outSpec\colon \R^\numClass\mapsto \{\true, \false\}$ is a predicate defined as follows:
\begin{align}\label{eq:outSpec}
    \textstyle\outSpec(\highY) \Defeq \left(\min_{1 \le i\le \numClass, i \neq i_l}\left(\highYEle{i_l} - \highYEle{i}\right) \right) > 0 
\end{align}
where $\highYEle{i}$ denotes the $i$-th element of the vector \highY, and $i_l$ is the label of $\highX$ inferred by the transformer, i.e., 
\begin{math}
    \forall i\in\{1,\ldots, \numClass\}. \transformer(\highX)_{i_l} - \transformer(\highX)_{i} > 0
\end{math}.

The robustness verification problem asks whether a transformer satisfies the robustness property, under a given input \highX and a given $\epsilon$. Moreover, as a common extended application of verification techniques, in this paper, we also aim to find the maximal $\epsilon$ under which the transformer remains robust.  Notably, this is meaningful in practice, e.g., the certified robustness within $\epsilon$ can relief the concern about any synonym substitution attack (see~\S{}\ref{sec:introduction})  within this region.

\subsection{Transformer Verification via Linear Relaxation}\label{sec:iclrApproach}
While verification for neural networks with simple architectures, such as feed-forward neural networks, has been extensively studied, verification for transformers still remains at a very early stage. The challenge arises from the high non-linearity of their inference function \transformer (see~\S{}\ref{sec:problemStatement}), which consists of multiple non-linear components such as self-attention layers. While pursuing an exact bound of \transformer is difficult, linear relaxation~\cite{Shi2020Robustness} provides a promising way to over-approximate the output range of \transformer, thereby achieving high efficiency. 

At the core of this approach are the construction and propagation of linear constraints used to bound the output ranges of non-linear operations throughout transformer inferences, which finally lead to linear bounds of transformer outputs. These linear bounds constitute over-approximation of transformer outputs, such that, as long as the over-approximation satisfies the robustness property, the original output must also satisfy it, thereby certifying robustness satisfaction.

In~\cite{Shi2020Robustness}, the linear constraints are constructed depending on the operations in each layer. As shown in Eq.~\ref{eq:transformer}, there are various non-linear operations in transformers, including self-attention in Eq.~\ref{eq:attention}, \relu activation functions, pooling layers, etc; as linear relaxation for some of these functions (e.g., \relu activation functions, pooling) have been extensively studied in classic neural network verification~\cite{singh2019abstract,zhang2018efficient,brix2023first}, we mainly focus on the self-attention layers uniquely existing in transformers and leave the relaxation of other functions in~\S{}\ref{sec:detailedICLR}. As shown in Eq.~\ref{eq:attention}, a self-attention layer mainly consists of two matrix multiplications and a softmax function. Below, we elaborate on the linear relaxation for the first matrix multiplication $\Qmat\transpose{\Kmat}$, for demonstration of our technique.

In Eq.~\ref{eq:attention}, given $\Qmat\in \R^{\numInputFirst\times \numInputSecond}$ and $\Kmat\in \R^{\numInputFirst\times \numInputSecond}$, $\Qmat\transpose{\Kmat}$ results in an $\numInputFirst\times \numInputFirst$ matrix, and each element of $\Qmat\transpose{\Kmat}$ can be represented as
\begin{math}
\Qmat\transpose{\Kmat}_{ij} =  \sum_{h=1}^{\numInputSecond}\qEle\kEle
\end{math}, 
given $i, j\in \{1,\ldots, \numInputFirst\}$. To compute the upper/lower bound of $\Qmat\transpose{\Kmat}_{ij}$, it suffices to compute the upper/lower bounds of a dot product $\qEle\kEle$, for each $i, j\in \{1,\ldots, \numInputFirst\}$ and $h\in\{1,\ldots, \numInputSecond\}$, and aggregate these bounds by taking their summation. To this end, \cite{Shi2020Robustness} proposes to use planar upper bound \qkUpperOne{\qEle, \kEle} and lower bound  \qkLowerOne{\qEle, \kEle} to bound the output range of $\qEle\kEle$, as follows:
\begin{equation}\label{eq:iclr}
\begin{aligned}
    \qkUpperOne{\qEle, \kEle} &\Defeq \kUpper\qEle + \qLower\kEle - \qLower\kUpper\\
    \qkLowerOne{\qEle, \kEle} &\Defeq \kLower\qEle + \qLower\kEle - \qLower\kLower 
\end{aligned}
\end{equation}
where $\qUpper$ and $\qLower$ are the upper and lower bounds of $\qEle$, and $\kUpper$ and $\kLower$ are the upper and lower bounds of $\kEle$. These bounds can be taken from the  bounds of the output of the preceding operations. For example, if the preceding operation of a self-attention layer is the input layer, $\Qmat$ simply involves an affine transformation to the input $\highXPerturb$ of the transformer (i.e., multiplying $\highXPerturb$ by $\QWmatrix$), hence the bounds for each element $\qEle$ can be computed by taking the affine transformation to the bounds for each element of $\highXPerturb$; similarly, we can compute the bounds for $\Kmat$. The soundness proof of Eq.~\ref{eq:iclr} can be found in~\cite[\S{}C.1]{Shi2020Robustness}.

By Eq.~\ref{eq:iclr}, we can compute the upper and lower bounds for each element of $\Qmat\transpose{\Kmat}$. In subsequent steps, these bounds are propagated layer by layer (see~\cite{Shi2020Robustness} and \S{}\ref{sec:detailedICLR} for details), until reaching the final transformer output $\transformer(\highXPerturb)$.  Based on the bounds for $\transformer(\highXPerturb)$, we can assess whether it holds the property $\outSpec(\transformer(\highXPerturb))$, which is returned as the verification result.

\section{ReLU-Based Bound Representation}
In linear relaxation approaches, imprecise bounding can introduce significant approximation error at each layer, which ultimately leads to \emph{false alarms}, i.e., the situations where the over-approximated output bounds violate the robustness property, but the actual output does not. To mitigate this issue, it is essential to construct tighter linear bounds that can reduce approximation error. Although the linear bounds in Eq.~\ref{eq:iclr} are provably sound, they are often suboptimal, thus significantly hindering the precision of verification results. In this section, we present a \relu-inspired abstraction refinement approach, which relies on a dual bound of Eq.~\ref{eq:iclr} for dot products in matrix multiplication.

\begin{figure}[!tb]
\centering
\begin{subfigure}[b]{0.35\linewidth}
\includegraphics[width=\textwidth, alt = {$z=xy$}]{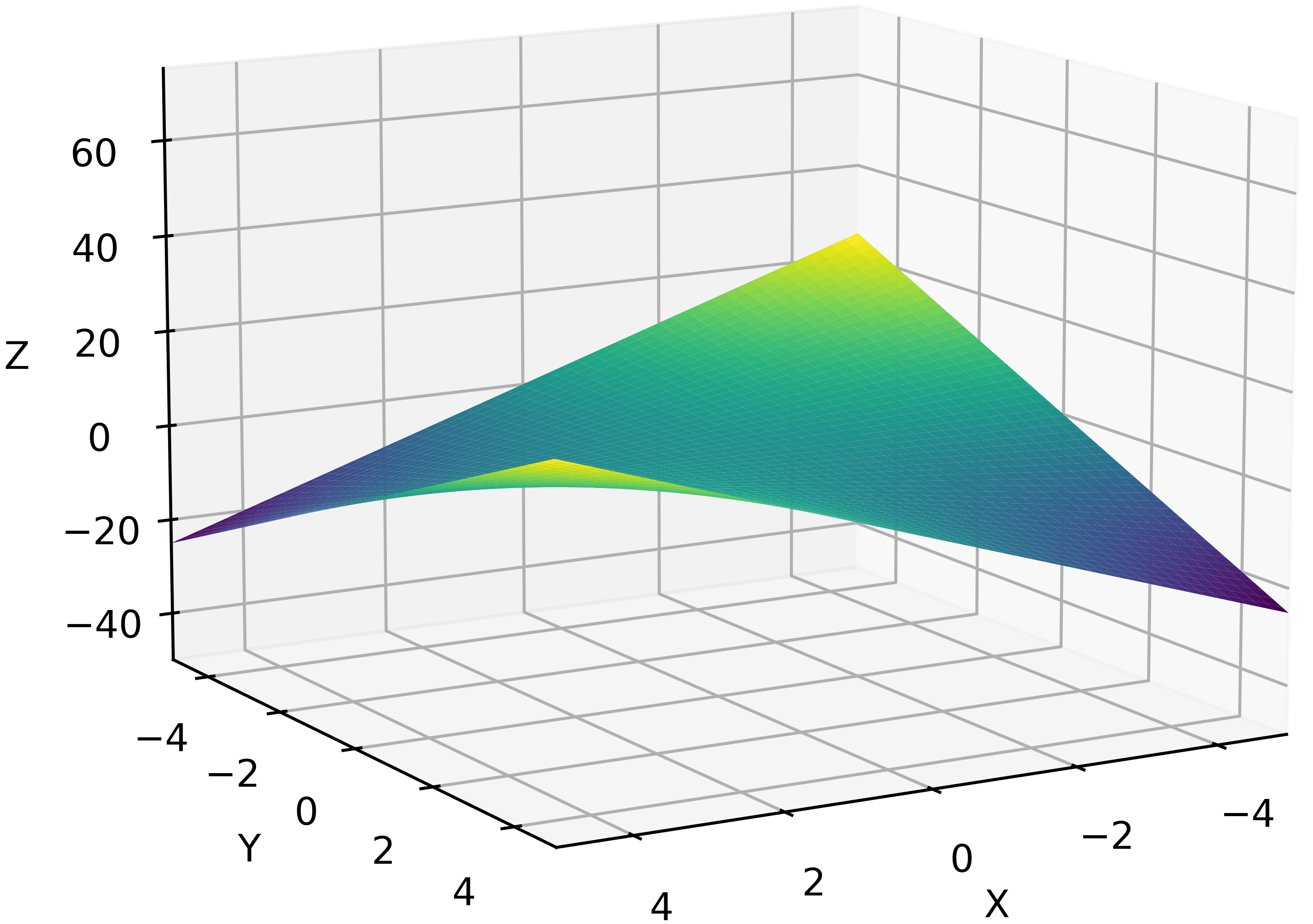}
\caption{$z=xy$}\label{fig:surface1}
\end{subfigure}
\qquad
\begin{subfigure}[b]{0.35\linewidth}
\includegraphics[width=\textwidth, alt = {Eq.~\ref{eq:iclr}: $\qkUpperOne{x, y}$}]{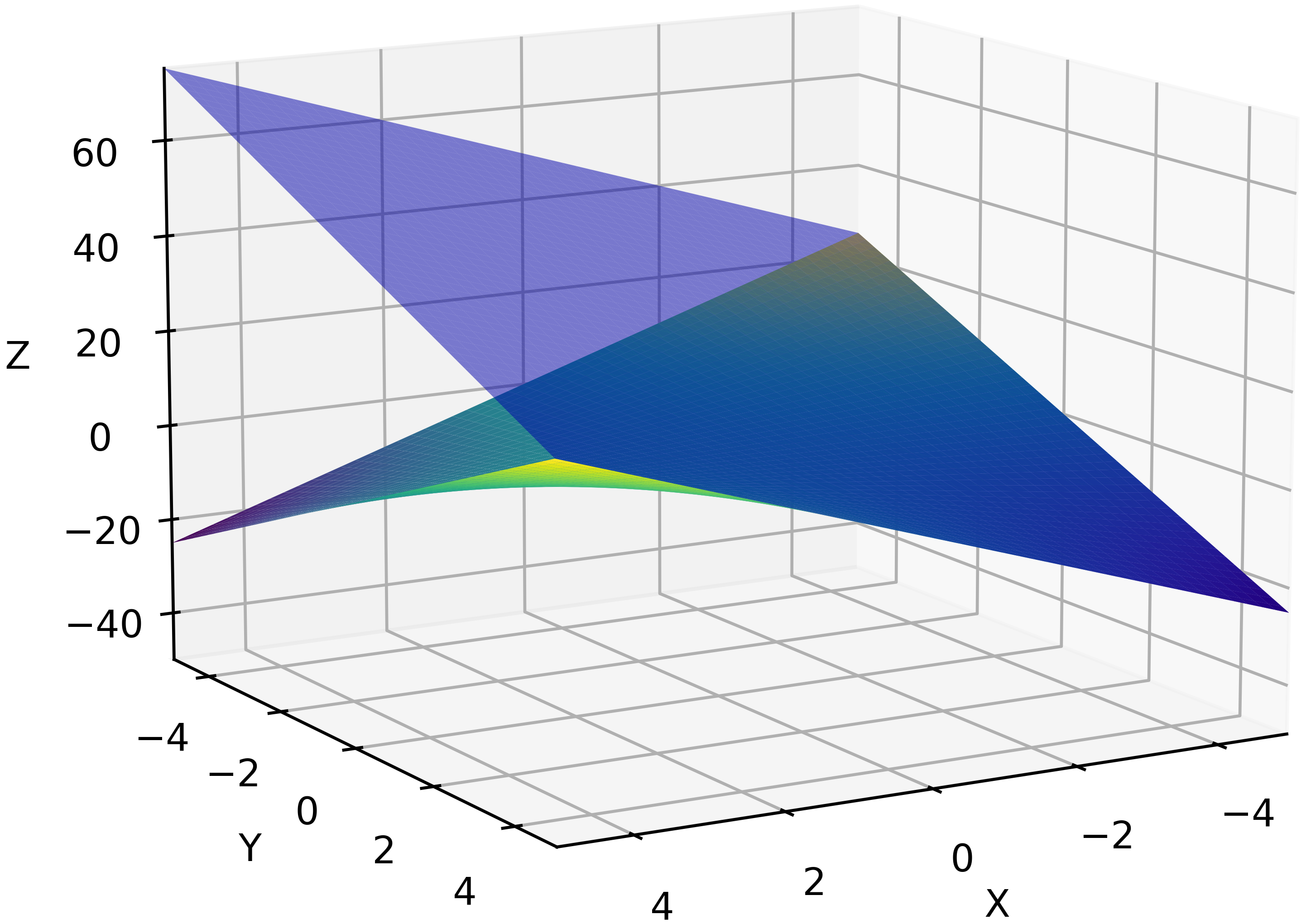}
\caption{Eq.~\ref{eq:iclr}: $\qkUpperOne{x, y}$}
\label{fig:surface2}
\end{subfigure}

\begin{subfigure}[b]{0.35\linewidth}
\includegraphics[width=\textwidth, alt = {Eq.~\ref{eq:dualBounds}: $\qkUpperTwo{x, y}$}]{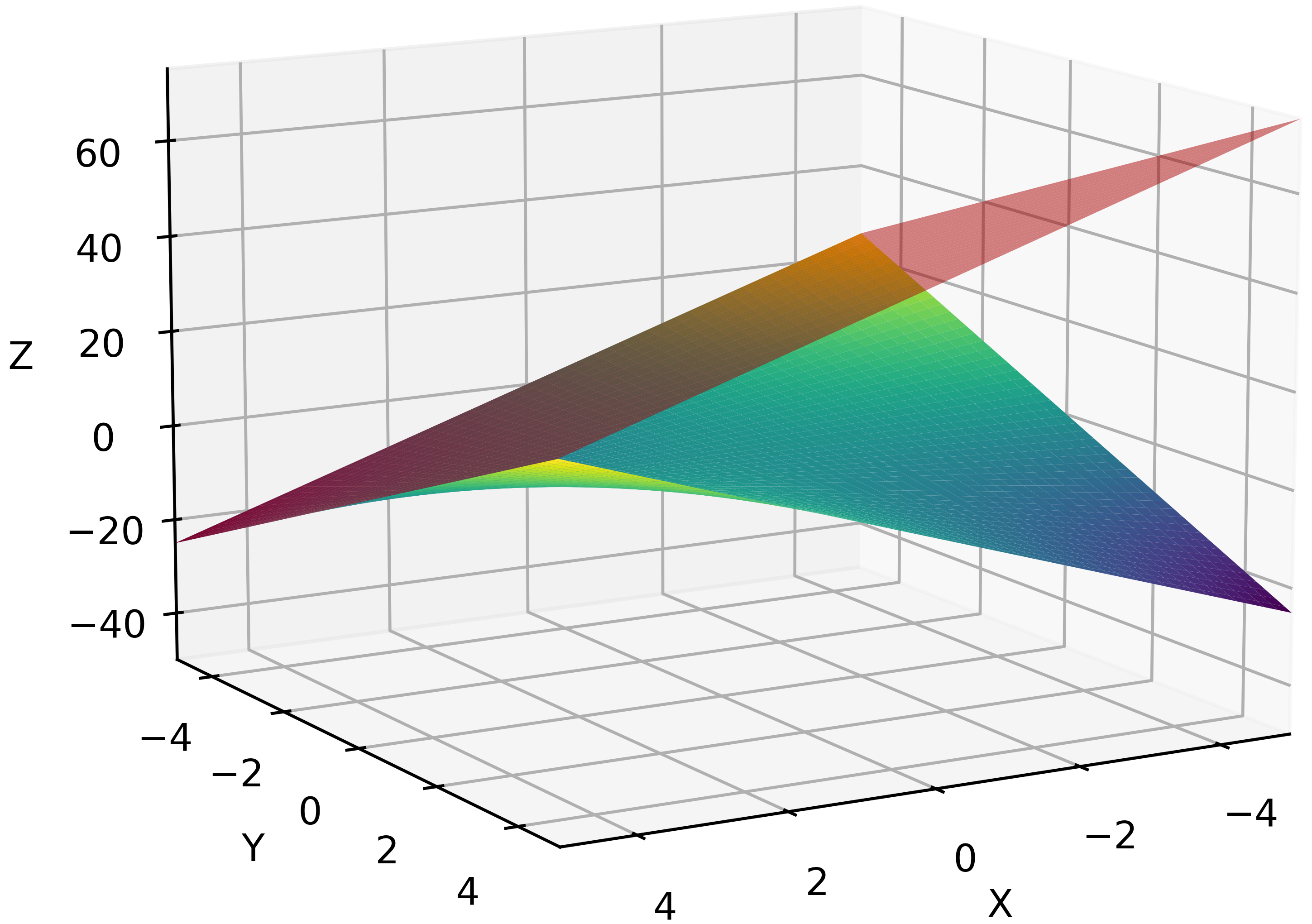}
\caption{Eq.~\ref{eq:dualBounds}: $\qkUpperTwo{x, y}$}
\label{fig:surface3}
\end{subfigure}
\qquad
\begin{subfigure}[b]{0.35\linewidth}
\includegraphics[width=\textwidth, alt = {Eq.~\ref{eq:jointU}: the merged bound}]{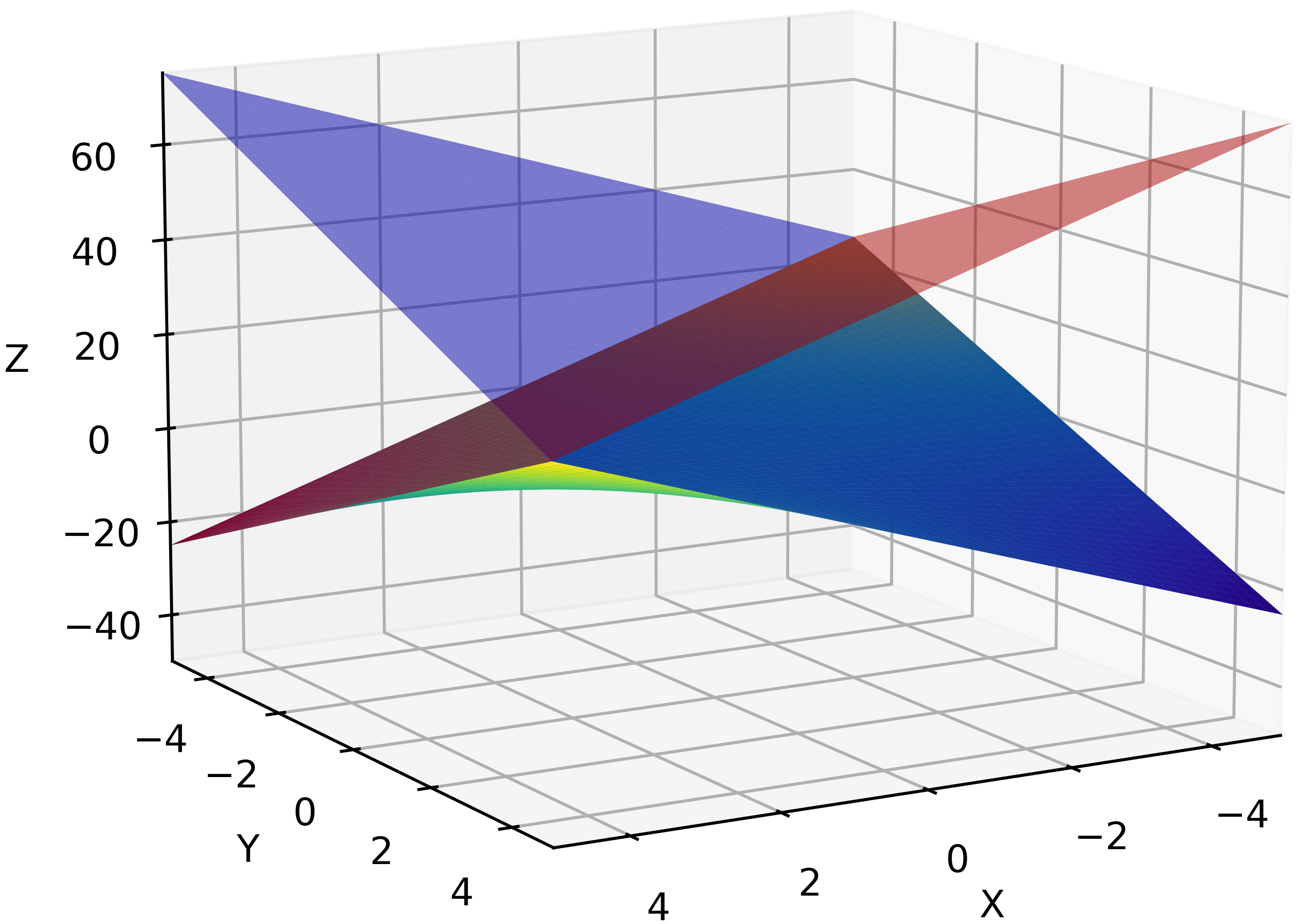}
\caption{Eq.~\ref{eq:jointU}: the merged bound}
\label{fig:surfaceAggregated}
\end{subfigure}
\caption{Illustrations of $z = xy$ and its different upper bounds}\label{fig:dotproductRelax}
\end{figure}

\subsection{Dual Bound Construction} Intuitively, as shown in Fig.~\ref{fig:surface1}, a dot product can form a hyperbolic paraboloid, and the bound given by Eq.~\ref{eq:iclr} essentially involves a plane that tightly bounds one side of the hyperbolic paraboloid. Dually, we can derive another plane that bounds the hyperbolic paraboloid from another side, i.e., we define another upper bound \qkUpperTwo{\qEle, \kEle} and lower bound \qkLowerTwo{\qEle, \kEle} for $\qEle\kEle$, as follows:
\begin{equation}\label{eq:dualBounds}
\begin{aligned}
    \qkUpperTwo{\qEle, \kEle} \Defeq \kLower \qEle + \qUpper \kEle - \qUpper\kLower\\
    \qkLowerTwo{\qEle, \kEle} \Defeq \kUpper\qEle + \qUpper\kEle - \qUpper\kUpper
\end{aligned}
\end{equation}
where $\qUpper$, $\qLower$, $\kUpper$,  $\kLower$ are the upper and lower bounds of $\Qmat$ and $\Kmat$, as defined in Eq.~\ref{eq:iclr}. The soundness of $\qkUpperTwo{\qEle, \kEle}$ and $\qkLowerTwo{\qEle, \kEle}$ is stated in Lemma~\ref{lem:dualBounds}, and we leave the proof in~\S{}\ref{sec:lemmaProof}.
\begin{lemma}\label{lem:dualBounds}
   Given the definition in Eq.~\ref{eq:dualBounds}, for any $\qEle\in \R$ and $\kEle\in\R$, it holds that  $\qEle\kEle \le \qkUpperTwo{\qEle, \kEle} $, and $\qEle\kEle \ge \qkLowerTwo{\qEle, \kEle}$.
\end{lemma}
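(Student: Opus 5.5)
Note first that the lemma as stated ("for any $\qEle\in\R$ and $\kEle\in\R$") cannot hold literally. The bounds can only be valid on the box $[\qLower,\qUpper]\times[\kLower,\kUpper]$. So I will prove it under the standing assumption of Eq.~\ref{eq:iclr}, namely that $\qLower\le\qEle\le\qUpper$ and $\kLower\le\kEle\le\kUpper$. This is also the setting in which the soundness proof of Eq.~\ref{eq:iclr} in~\cite[\S{}C.1]{Shi2020Robustness} operates. The plan is to rewrite each gap, bound minus product, as a product of two signed factors and read off its sign.

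For the upper bound, I will expand
\begin{equation*}
\qkUpperTwo{\qEle,\kEle}-\qEle\kEle = \kLower\qEle+\qUpper\kEle-\qUpper\kLower-\qEle\kEle .
\end{equation*}
I expect this to factor as $(\qUpper-\qEle)(\kEle-\kLower)$. The check is routine: expanding gives $\qUpper\kEle-\qUpper\kLower-\qEle\kEle+\qEle\kLower$, which matches. Both factors are nonnegative on the box, because $\qEle\le\qUpper$ and $\kEle\ge\kLower$. Hence the gap is nonnegative, i.e.\ $\qEle\kEle\le\qkUpperTwo{\qEle,\kEle}$.

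For the lower bound, I will similarly write
\begin{equation*}
\qEle\kEle-\qkLowerTwo{\qEle,\kEle} = \qEle\kEle-\kUpper\qEle-\qUpper\kEle+\qUpper\kUpper = (\qUpper-\qEle)(\kUpper-\kEle).
\end{equation*}
Again both factors are nonnegative on the box, so $\qEle\kEle\ge\qkLowerTwo{\qEle,\kEle}$.

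There is no real obstacle beyond stating the domain restriction correctly and guessing the right factorizations. These are McCormick-type envelopes, so the factors always pair the distance to one bound of $\qEle$ with the distance to one bound of $\kEle$. As a sanity check I would note that the gaps vanish on the edges $\qEle=\qUpper$ or $\kEle=\kLower$ (respectively $\kEle=\kUpper$). This confirms that the planes touch the hyperbolic paraboloid along the side opposite to Eq.~\ref{eq:iclr}, consistent with Fig.~\ref{fig:surface3}.
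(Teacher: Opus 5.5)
Your proof is correct and is essentially the paper's argument. The paper factors the upper-bound gap as $(\kLower-\kEle)(\qEle-\qUpper)$, a product of two non-positive terms on the box, and says the lower bound follows ``similarly''; you spell that lower-bound case out as $(\qUpper-\qEle)(\kUpper-\kEle)\ge 0$. Your domain caveat is also right: despite the lemma's ``for any'' wording, the paper's proof likewise relies on $\qEle\in[\qLower,\qUpper]$ and $\kEle\in[\kLower,\kUpper]$.
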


As illustrated in Fig.~\ref{fig:surface3}, the plane derived from Eq.~\ref{eq:dualBounds} complements the suboptimal approximation of Eq.~\ref{eq:iclr}, and consequently, the joint use of both Eq.~\ref{eq:iclr} and Eq.~\ref{eq:dualBounds}, based on the comparison of their values, results in tighter upper and lower bounds \qkUpperNonlin{\qEle, \kEle} and \qkLowerNonlin{\qEle, \kEle}, defined as follows:
\begin{align}
    &\qkUpperNonlin{\qEle, \kEle} \Defeq \min\left(\qkUpperOne{\qEle, \kEle}, \qkUpperTwo{\qEle, \kEle}\right) \label{eq:jointU} \\
    &\qkLowerNonlin{\qEle, \kEle} \Defeq \max\left(\qkLowerOne{\qEle,\kEle}, \qkLowerTwo{\qEle,\kEle}\right) \label{eq:jointL}
\end{align}

While Eq.~\ref{eq:jointU} and Eq.~\ref{eq:jointL} significantly refine the approximation error compared to using a single plane such as the one in Eq.~\ref{eq:iclr} or in Eq.~\ref{eq:dualBounds} only, adopting them for verification is not feasible, because they are no longer planar but non-linear (see Fig.~\ref{fig:surfaceAggregated}); in that case, propagating the bounds throughout the transformer operations requires to deal with exponentially increasing planar bounds, making verification time also exponentially increasing.  Consequently, this approach would not be feasible to handle transformers of large sizes. To address this issue,  we introduce our approach by identifying new linear bounds based on Eq.~\ref{eq:jointU} and Eq.~\ref{eq:jointL}, which can achieve higher precision without compromise of efficiency. 

\subsection{ReLU-Based Representation of Dual Bounds}\label{sec:reluRepre}
In this section, we present our ReLU-based representations of Eq.~\ref{eq:jointU} and Eq.~\ref{eq:jointL}, by which we derive our proposed linear bounds for dot products. The advantage of our representations involves that, we can exploit the rich body of existing research in linear relaxation of ReLU activation functions for classic neural network verification, thereby refining the approximation error in the existing approach~\cite{Shi2020Robustness} to achieve better verification precision. 

As a piecewise linear function, ReLU has been extensively adopted and studied as activation functions in various types of neural networks, e.g., feed-forward neural networks, the most fundamental neural network architecture. In our context, we explore a \emph{novel usage} of \relu, namely, we represent \qkUpperNonlin{\qEle,\kEle} in Eq.~\ref{eq:jointU} and \qkLowerNonlin{\qEle, \kEle} in Eq.~\ref{eq:jointL} by using ReLU, formulated as follows: 
\begin{align}
    &\qkUpperNonlin{\qEle,\kEle} \Defeq \qkUpperOne{\qEle,\kEle} - \relu\left(\qkUpperOne{\qEle, \kEle} - \qkUpperTwo{\qEle, \kEle}\right)\label{eq:reluU} \\
    &\qkLowerNonlin{\qEle, \kEle} \Defeq \qkLowerOne{\qEle,\kEle} + \relu\left(\qkLowerTwo{\qEle,\kEle} - 
 \qkLowerOne{\qEle, \kEle}\right) \label{eq:reluL}
\end{align}

The equivalence between Eq.~\ref{eq:jointU} and Eq.~\ref{eq:reluU}, and dually, between Eq.~\ref{eq:jointL} and Eq.~\ref{eq:reluL}, can be derived based on the definition of \relu, which we leave in~\S{}\ref{sec:equal}. 

\medskip
Next, we explain how Eq.~\ref{eq:jointU} and Eq.~\ref{eq:jointL} are useful to explain the nature of the approximation in~\cite{Shi2020Robustness}, thereby proposing our new approach that refines the existing approach by leveraging the ReLUs in Eq.~\ref{eq:jointU} and Eq.~\ref{eq:jointL}.

\smallskip\noindent\textit{Lower Bounds of ReLU} At this stage, our aim is to obtain a linear upper bound \qkUpperLin{\qEle,\kEle} for \qkUpperNonlin{\qEle, \kEle} in Eq.~\ref{eq:reluU} and a linear lower bound \qkLowerLin{\qEle,\kEle} for \qkLowerNonlin{\qEle, \kEle} in Eq.~\ref{eq:reluL}, because
\begin{inparaenum}[1)]
    \item both $\qkUpperLin{\qEle,\kEle}$ and $\qkLowerLin{\qEle,\kEle}$ should be sound;
    \item both Eq.~\ref{eq:reluU} and Eq.~\ref{eq:reluL} are non-linear.
\end{inparaenum} 

For both \qkUpperNonlin{\qEle, \kEle} and \qkLowerNonlin{\qEle, \kEle}, the non-linearity arises from ReLUs, so the problem boils down to finding linear bounds for ReLUs to substitute their occurrences in Eq.~\ref{eq:reluU} and Eq.~\ref{eq:reluL}. Moreover, we find that, in both Eq.~\ref{eq:reluU} and Eq.~\ref{eq:reluL}, only the linear lower bound $\frelulow$ of ReLU is needed, because it suffices to ensure the soundness of \qkUpperLin{\qEle,\kEle} and \qkLowerLin{\qEle,\kEle}, shown as follows:
\begin{equation}
\begin{aligned}
    \qkUpperNonlin{\qEle,\kEle} &\le  \qkUpperOne{\qEle,\kEle} - \frelulow\left(\qkUpperOne{\qEle, \kEle} - \qkUpperTwo{\qEle, \kEle}\right) \\
    &= \qkUpperLin{\qEle,\kEle} \label{eq:linU} 
\end{aligned}
\end{equation}
\begin{equation}
\begin{aligned}
    \qkLowerNonlin{\qEle,\kEle} &\ge   \qkLowerOne{\qEle,\kEle} + \frelulow\left(\qkLowerTwo{\qEle,\kEle} - 
 \qkLowerOne{\qEle, \kEle}\right) \\
 &= \qkLowerLin{\qEle,\kEle} \label{eq:linL}
\end{aligned}
\end{equation}
Consequently, we need to find an optimal linear bound $\frelulow$ for ReLU that can minimize the approximation error to improve the precision of verification. 

Finding linear bounds for ReLUs is an important issue and has been extensively studied~\cite{singh2019abstract,zhang2018efficient,xu2020fast,weng2018towards,zhang2022provably} in classic neural network verification (i.e., verification of feed-forward neural networks with fully-connected layers). 
In general, given the definition of ReLU, the lower bound $\frelulow(x)$ of ReLU can be identified by a parametric function:
\begin{align}
    \frelulow(x) = \alpha\cdot x, \text{ where } \alpha\in[0, 1] \label{eq:reluLB}
\end{align}
The selection of the parameter $\alpha$ is a tricky question and has a significant impact to the approximation precision. The approach for deciding it constitutes our main technical contribution, which will be elaborated on in \S{}\ref{sec:methodology}.

\smallskip\noindent\textit{Explanation of Existing Approach} We show how the ReLU-based bound representation in Eq.~\ref{eq:reluU} and Eq.~\ref{eq:reluL} can be used to explain the approximation approach in~\cite{Shi2020Robustness}, highlighting the connection between transformer verification and classic neural network verification. Specifically, when $\alpha = 0$, by Eq.~\ref{eq:linU}, Eq.~\ref{eq:linL} and Eq.~\ref{eq:reluLB}, we can derive the following:
\begin{align*}
    \qkUpperLin{\qEle, \kEle} = \qkUpperOne{\qEle, \kEle}, \quad \qkLowerLin{\qEle, \kEle} = \qkLowerOne{\qEle, \kEle}
\end{align*} 
Namely, $\alpha = 0$ is essentially the selection in~\cite{Shi2020Robustness}. Dually, when $\alpha = 1$, by Eq.~\ref{eq:linU}, Eq.~\ref{eq:linL} and Eq.~\ref{eq:reluLB}, we can derive that: 
\begin{align*}
    \qkUpperLin{\qEle, \kEle} = \qkUpperTwo{\qEle, \kEle}, \quad \qkLowerLin{\qEle, \kEle} = \qkLowerTwo{\qEle, \kEle}
\end{align*}
Namely, $\alpha = 1$ is essentially the selection of Eq.~\ref{eq:dualBounds}. 

However, existing works~\cite{weng2018towards,zhang2018efficient,zhang2022provably} have shown that, none of the above strategies, i.e., by fixing $\alpha$ to be $1$ or to be $0$, is the optimal way of ReLU relaxation. This motivates us to explore more effective approaches to refine the approximation.

\section{Our Proposed Abstraction Refinement Approach}
\label{sec:methodology}
As stated in \S{}\ref{sec:reluRepre}, given the ReLU-based representation in Eq.~\ref{eq:reluU} and Eq.~\ref{eq:reluL}, we can reduce the problem of finding optimal planar bounds for dot products, to the problem of finding optimal $\alpha$, i.e., the slope of linear lower bounds of ReLUs. To this end, we extend two approaches from classic neural network verification: the first strategy inspired by~\cite{singh2019abstract} selects $\alpha$ based on the input ranges of ReLUs, with the aim of minimizing the approximation error, measured by the area of the over-approximation; the second one inspired by~\cite{xu2020fast} formalizes the approximation refinement as an optimization problem with $\alpha$ as decision variables, and calls modern optimization solvers to iteratively refine the approximation until verifying the problem. In the following sections, we elaborate on these two approaches. 
\begin{figure}[!tb]
\centering
\begin{subfigure}[b]{0.27\textwidth}
\includegraphics[width=\textwidth, alt = {ReLU function}]{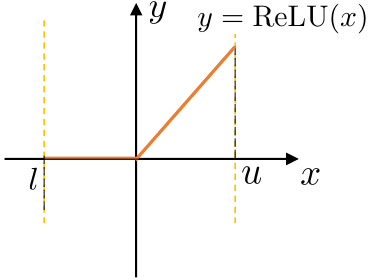}
\caption{ReLU function}\label{fig:reluOrig}
\label{fig:landscape}
\end{subfigure}~~~~~~%
\qquad
\begin{subfigure}[b]{0.22\textwidth}
\includegraphics[width=\textwidth, alt = {linear bounds when $\alpha = 0$}]{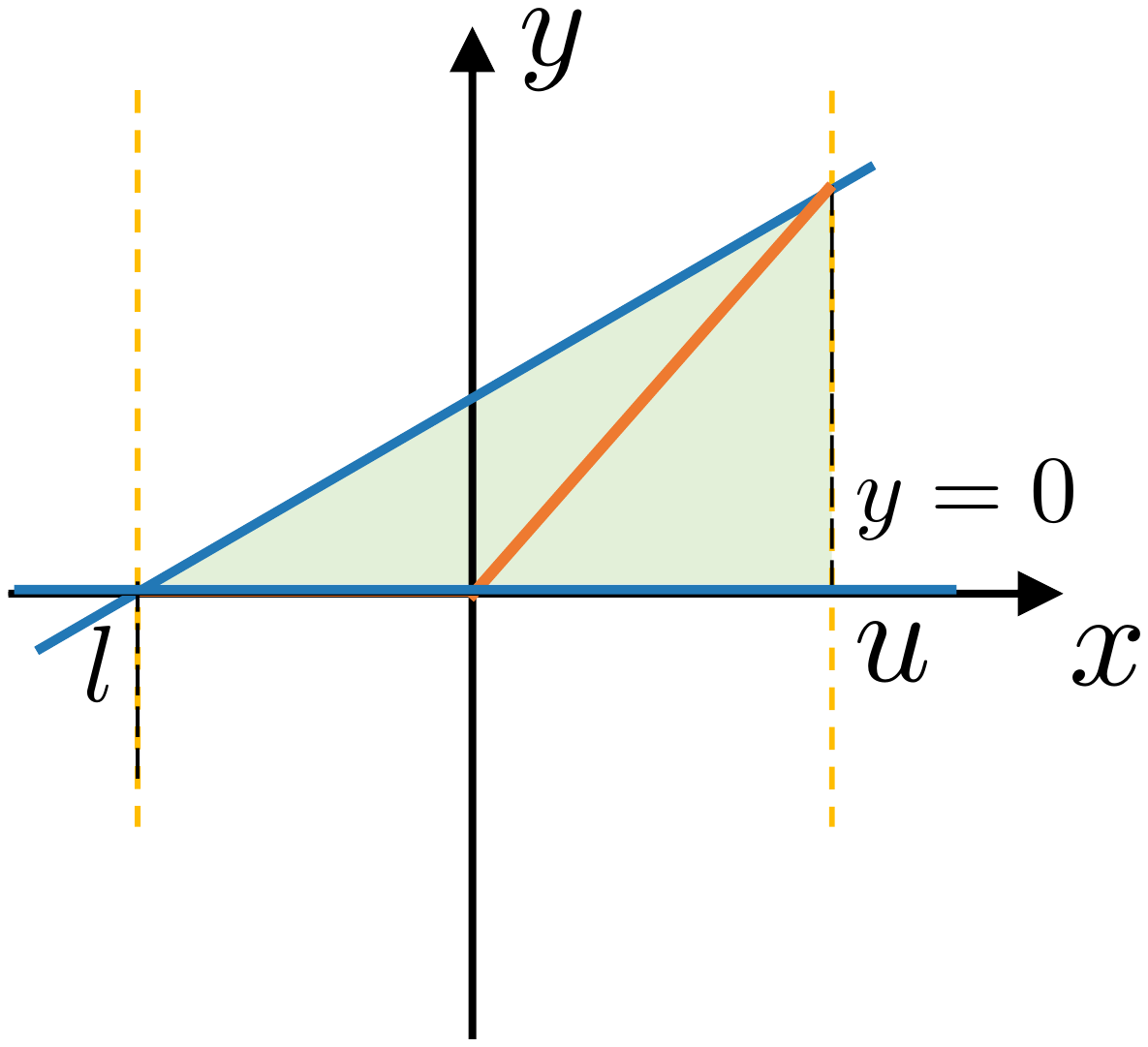}
\caption{$\alpha = 0$}
\label{fig:relu2}
\end{subfigure}
\quad
\begin{subfigure}[b]{0.22\textwidth}
\includegraphics[width=\textwidth, alt = {linear bounds when $\alpha = 1$}]{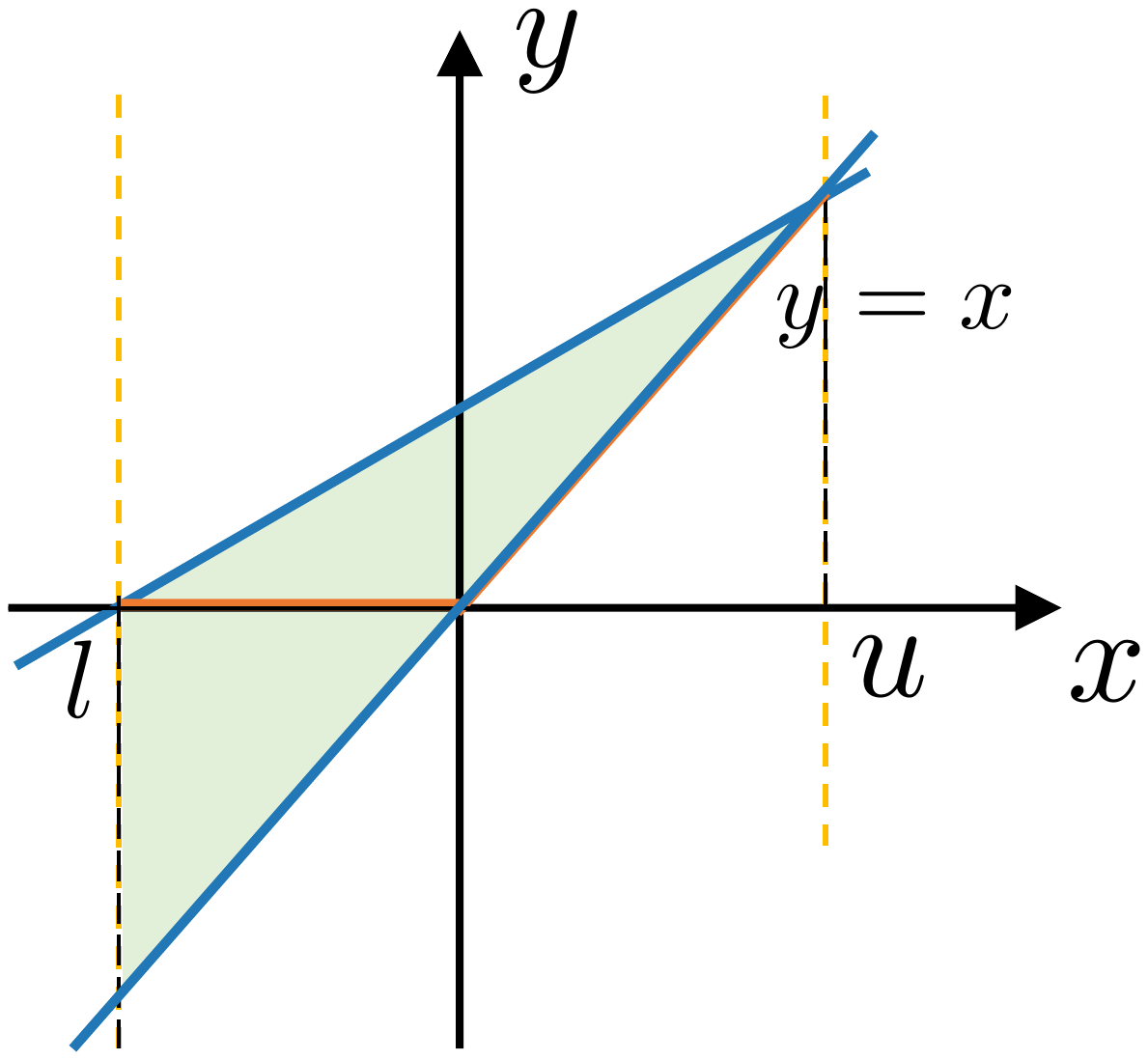}
\caption{$\alpha = 1$}
\label{fig:relu3}
\end{subfigure}
\caption{Linear relaxation of ReLU. Given the representation $y = \alpha x$ of the lower bound of ReLU, the sub-figures plot the different approximation errors (denoted as the shadow area) under different $\alpha$.}\label{fig:reluRelax}
\end{figure}
\subsection{\toolp: Rule-Based $\alpha$ Selection}\label{sec:ruleBased}
Selecting proper $\alpha$ in Eq.~\ref{eq:reluLB} as the lower bound of ReLU is an important but challenging problem in classic neural network verification. Depending on the input range of ReLU, there could be distinct approximation error due to the selection of $\alpha$. Here, the approximation error is often evaluated according to the area of the over-approximation brought by the linear bounds. 

Let $\reluInUpp$ and $\reluInLow$ be the upper and lower bounds of the input range of a ReLU.  In our context, for the upper bounds of dot products $\qEle\kEle$, by Eq.~\ref{eq:reluU} and Eq.~\ref{eq:iclr} \&\ref{eq:dualBounds}, $\reluInUpp$ and $\reluInLow$ are the upper and lower bounds of the following ReLU input:
\begin{small}
\begin{align}\label{eq:reluInputU}
  \qkUpperOne{\qEle, \kEle} - \qkUpperTwo{\qEle, \kEle} =  
  (\kUpper - \kLower)\qEle + (\qLower - \qUpper)\kEle - \qLower\kUpper + \qUpper\kLower
\end{align}
\end{small}
Similarly, we can also derive for the lower bounds of dot products by Eq.~\ref{eq:reluL} and Eq.~\ref{eq:iclr} \&\ref{eq:dualBounds}, in which case the ReLU input is as follows:
\begin{small}
\begin{align}\label{eq:reluInputL}
  \qkLowerTwo{\qEle, \kEle} - \qkLowerOne{\qEle, \kEle} = 
  (\kUpper - \kLower)\qEle + (\qUpper - \qLower)\kEle - \qUpper\kUpper + \qLower\kLower
\end{align}
\end{small}

Given Eq.~\ref{eq:reluInputU} and Eq.~\ref{eq:reluInputL}, we can derive the values of $\reluInUpp$ and $\reluInLow$, by computing the upper and lower bounds of these ReLU inputs. For instance, the upper bound $\reluInUpp$ and the lower bound $\reluInLow$ of Eq.~\ref{eq:reluInputU} equals to:
\begin{equation}\label{eq:reluInputBounds}
\begin{aligned}
    \reluInUpp = (\kUpper - \kLower)\qEleU + (\qLower - \qUpper)\kEleL - \qLower\kUpper + \qUpper\kLower \\
    \reluInLow = (\kUpper - \kLower)\qEleL + (\qLower - \qUpper)\kEleU - \qLower\kUpper + \qUpper\kLower
\end{aligned}
\end{equation}
where \qEleU and \qEleL are respectively the upper and lower bounds of  \qEle, and \kEleU and \kEleL are respectively the upper and lower bounds of \kEle. As explained in~\S{}\ref{sec:iclrApproach}, these bounds can be derived from the preceding operations in transformer inferences, hence we can obtain the value of $\reluInUpp$ and $\reluInLow$. The bounds for Eq.~\ref{eq:reluInputL} can be obtained similarly.

Below, we consider how to choose $\alpha$ in the case when $\reluInUpp\in(0, \infty)$ and $\reluInLow \in (-\infty, 0)$ only; in other cases (i.e., when $\reluInUpp\in(-\infty, 0)$ or $\reluInLow \in (0, \infty)$), the ReLU becomes linear and so we do not need to choose $\alpha$. 

As shown in Fig.~\ref{fig:relu2} and \ref{fig:relu3}, the area of over-approximation is $\frac{|\reluInUpp|^2 + |\reluInUpp||\reluInLow|}{2}$ if $\alpha$ is set to be 0, and the area of over-approximation is  $\frac{|\reluInLow|^2 + |\reluInUpp||\reluInLow|}{2}$ if $\alpha$ is set to be 1. Therefore, \cite{singh2019abstract} introduces the following condition-based policy:
\begin{compactenum}[i)]
    \item \label{case:alpha0} when $|\reluInLow| > |\reluInUpp|$, $\alpha$ is set to be $0$, because the area of over-approximation when $\alpha = 0$ is smaller than that when $\alpha = 1$;
    \item \label{case:alpha1} when $|\reluInUpp| > |\reluInLow|$, $\alpha$ is set to be $1$, because the area of over-approximation when $\alpha = 1$ is smaller than that when $\alpha = 0$.
\end{compactenum}
     Namely, this policy takes the better selection in both of the cases.

In our context, as we can compute $\reluInUpp$ and $\reluInLow$ for each ReLU, we can apply the same policy for selecting proper $\alpha$: 
\begin{inparaenum}[1)]
    \item when $|\reluInLow| > |\reluInUpp|$, we set $\alpha$ to be 0, which essentially equals to adopting the linear bounds in Eq.~\ref{eq:iclr};
    \item when $|\reluInUpp| > |\reluInLow|$, we set $\alpha$ to be 1, which essentially equals to adopting the linear bounds in Eq.~\ref{eq:dualBounds}.
\end{inparaenum}

\subsection{\toola: Optimization-Based Iterative Refinement}\label{sec:optBased}
While the rule-based strategy in \S{}\ref{sec:ruleBased} can refine the approximation, it only offers two choices for bounding each ReLU, i.e., $\alpha = 0$ or $\alpha = 1$, which may not be the optimal choices under all different cases. To further refine the approximation, we propose an optimization-based approach \toola, which treats $\alpha$ as the decision variables and iteratively searches for the optimal $\alpha$ to optimize the refinement. 

Recall the specification of transformer verification formalized in Eq.~\ref{eq:outSpec}; as we require that
\begin{math}
     \left(\min_{1 \le i\le \numClass, i \neq i_l}\left(\highYEle{i_l} - \highYEle{i}\right) \right) > 0 
\end{math}, 
verification consists in showing that the lower bound \marginLow of \margin, as defined below, is greater than 0. 
 \begin{align}\label{eq:margin}
      \margin \Defeq \left(\min_{1 \le i\le \numClass, i \neq i_l}\left(\highYEle{i_l} - \highYEle{i}\right) \right) 
 \end{align}

The bounds of \margin are decided by the bounds of \highY, i.e., the output vector of a given transformer. Further, the bounds of \highY can be obtained by layer-by-layer propagation, depending on the operations in the transformer. Below, we only elaborate on the computation of the bounds for dot products in self-attention layers, as we have done in previous sections; for other layers, we adopt the bounding methods in literature as introduced in ~\S{}\ref{sec:detailedICLR}. 

\medskip
Given $\frelulow(x) = \alpha\cdot x$ as a linear lower bound of ReLU in Eq.~\ref{eq:reluLB}, we can expand $\qkUpperLin{\qEle,\kEle}$ in Eq.~\ref{eq:linU} as follows:
\begin{equation}\label{eq:alphaUpper}
\begin{subfootnotesize}
\begin{aligned}
    \qkUpperLin{\qEle, \kEle} &= \qkUpperOne{\qEle,\kEle} - \frelulow\left(\qkUpperOne{\qEle, \kEle} - \qkUpperTwo{\qEle, \kEle}\right) \\
    &= (1-\alpha)\qkUpperOne{\qEle, \kEle} + \alpha\qkUpperTwo{\qEle, \kEle} \\
    &= \left((1- \alpha)\kUpper + \alpha\kLower\right)\qEle + \left((1 - \alpha)\qLower + \alpha \qUpper\right)\kEle - \left((1-\alpha)\qLower\kUpper + \alpha\qUpper\kLower\right)
\end{aligned}
\end{subfootnotesize}
\end{equation}
Similarly, we can expand $\qkLowerLin{\qEle,\kEle}$ as follows:
\begin{equation}\label{eq:alphaLower}
\begin{subfootnotesize}
\begin{aligned}
   \qkLowerLin{\qEle,\kEle} &= \qkLowerOne{\qEle,\kEle} + \frelulow\left(\qkLowerTwo{\qEle,\kEle} - 
 \qkLowerOne{\qEle, \kEle}\right) \\
 &= \alpha \qkLowerTwo{\qEle,\kEle}  + (1-\alpha) \qkLowerOne{\qEle,\kEle} \\
 &= \left( \alpha\kUpper + (1-\alpha)\kLower \right)\qEle + \left( \alpha\qUpper + (1-\alpha)\qLower \right)\kEle - \left( \alpha\qUpper\kUpper + (1-\alpha)\qLower\kLower \right)
\end{aligned}
\end{subfootnotesize}
\end{equation}
In this way, $\qkUpperLin{\qEle, \kEle}$ and $\qkLowerLin{\qEle,\kEle}$ are two parametric planar bounds that can be propagated throughout the layers of transformers, finally participating in the computation of transformer output $\highY$. 

\smallskip\noindent\textit{Optimization Approach} Now we  present our optimization-based approach. In this optimization, we treat the bound computations in Eq.~\ref{eq:alphaUpper} and Eq.~\ref{eq:alphaLower} for dot products, as well as the bound computations for other operations, as constraints. 
Then, as the usual practice in optimization (e.g., neural network training) and for the sake of numerical stability, we adopt the standard softplus/logistic loss function as the objective of optimization: 
\begin{align}
\loss \Defeq \log(1+\exp(-\,\marginLow)). \label{eq:logistic-loss}
\end{align}
This leads to an optimization problem as follows:
\begin{align*}
    \min \loss, \quad \text{s.t. }\forall\alpha\in [0,1]
\end{align*}
In this optimization, as the value of $\alpha$ for each ReLU varies, the loss function $\loss$ also varies, because the variation of $\alpha$ can change the upper and lower bounds of each dot product operation, which subsequently affects the upper and lower bounds of the input of the succeeding operation, i.e., the ReLU input of the succeeding operation (see Eq.~\ref{eq:reluInputBounds} for an explanation). The optimization can be terminated as soon as the value of \marginLow is positive, which signifies that such a group of $\alpha$ has been found that can ensure \margin to be positive, thereby certifying the robustness satisfaction of the transformer. 

One notable advantage of our formalization consists in that, the derivative of the loss w.r.t. each $\alpha$ can be easily computed by the automatic differentiation features in modern machine learning platforms (e.g., PyTorch), so we can call off-the-shelf optimization solvers, such as Adam, a widely used gradient-based optimization solver, to efficiently solve this problem. Consequently, this allows us to optimize $\alpha$ in a manner similar to neural network training, leveraging the extensive advances and optimizations developed in that domain.

\begin{theorem}[Soundness]
    Both \toolp and \toola are sound.
\end{theorem}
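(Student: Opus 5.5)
The plan is to reduce soundness of both frameworks to a single local fact. For every dot product $\qEle\kEle$ appearing in a self-attention layer, and for every admissible $\alpha\in[0,1]$, the planar functions $\qkUpperLin{\qEle,\kEle}$ and $\qkLowerLin{\qEle,\kEle}$ of Eq.~\ref{eq:alphaUpper} and Eq.~\ref{eq:alphaLower} satisfy
\[
\qkLowerLin{\qEle,\kEle}\le \qEle\kEle\le \qkUpperLin{\qEle,\kEle}
\]
whenever $\qEle\in[\qLower,\qUpper]$ and $\kEle\in[\kLower,\kUpper]$.

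Once this holds, the remaining operations (softmax, ReLU in \ffnn, add\&norm, pooling) are bounded by the relaxations of~\cite{Shi2020Robustness} recalled in \S{}\ref{sec:detailedICLR}. These are already known to be sound. Soundness then propagates layer by layer by induction on the layers. If each layer's linear bounds over-approximate its exact output on the over-approximated input region, the final bound $\marginLow$ is a true lower bound of $\margin$ for every $\highXPerturb$ with $\inSpec(\highXPerturb)$. Hence $\marginLow>0$ implies $\outSpec(\transformer(\highXPerturb))$.

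To prove the local fact, I will use the convex-combination form in Eq.~\ref{eq:alphaUpper}: $\qkUpperLin{}=(1-\alpha)\qkUpperOne{}+\alpha\qkUpperTwo{}$. The bound $\qkUpperOne{}\ge \qEle\kEle$ on the box is from~\cite[\S{}C.1]{Shi2020Robustness}, and $\qkUpperTwo{}\ge\qEle\kEle$ is Lemma~\ref{lem:dualBounds}. A convex combination of two upper bounds is therefore an upper bound.

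If a direct check is wanted, note that $\qkUpperTwo{}-\qEle\kEle=(\qUpper-\qEle)(\kEle-\kLower)\ge 0$ and $\qkUpperOne{}-\qEle\kEle=(\qEle-\qLower)(\kUpper-\kEle)\ge0$. Symmetrically, for the lower bounds, $\qEle\kEle-\qkLowerOne{}=(\qEle-\qLower)(\kEle-\kLower)\ge0$ and $\qEle\kEle-\qkLowerTwo{}=(\qUpper-\qEle)(\kUpper-\kEle)\ge0$. Alternatively, the same conclusion follows from the ReLU representation: $\relu(x)\ge\alpha x$ for $\alpha\in[0,1]$, together with Eq.~\ref{eq:linU}/\ref{eq:linL} and the equivalence of Eq.~\ref{eq:jointU} with Eq.~\ref{eq:reluU}.

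For \toolp, the rule only ever picks $\alpha\in\{0,1\}$, so it is a special case. For \toola, I will argue that every iterate of Adam is kept in $[0,1]$ (by projection or clipping, as the constraint $\alpha\in[0,1]$ demands). Every intermediate $\alpha$ then yields a sound bound, so whichever iterate first gives $\marginLow>0$ certifies robustness. Optimality of $\alpha$ is irrelevant to soundness.

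The main obstacle I expect is the subtlety that $\qLower,\qUpper,\kLower,\kUpper$ must be genuine bounds of $\qEle,\kEle$ over the whole perturbation region. In \toola these bounds themselves depend on earlier $\alpha$'s, and the local product inequalities require $\qEle,\kEle$ to lie in that box. I will handle this inside the layer induction: at each layer the interval bounds are computed from already-sound linear bounds of preceding layers, for the current fixed assignment of all $\alpha$'s. The factored nonnegativity arguments then apply verbatim. A second point to state explicitly is that the ReLU-input bounds $\reluInUpp,\reluInLow$ of Eq.~\ref{eq:reluInputBounds} affect only the choice of $\alpha$, not soundness.
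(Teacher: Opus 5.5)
Your proposal is correct and follows essentially the paper's argument. The paper's sketch also uses the fact that, for any $\alpha\in[0,1]$, the inequality $\relu(x)\ge\alpha x$ turns the nonlinear fused bound $\min(\qkUpperOne{\qEle,\kEle},\qkUpperTwo{\qEle,\kEle})\ge\qEle\kEle$ of Eq.~\ref{eq:jointU}/\ref{eq:reluU} into the sound planar bound of Eq.~\ref{eq:linU} (and dually Eq.~\ref{eq:linL}); this is exactly your convex-combination inequality in different notation, and the paper then implicitly relies on the soundness of the relaxations for the other layers. Your factored nonnegativity checks, the explicit restriction to the box $[\qLower,\qUpper]\times[\kLower,\kUpper]$ (which Lemma~\ref{lem:dualBounds} as stated omits), the layer induction under a fixed assignment of the $\alpha$'s, and the requirement that the optimizer's iterates stay in $[0,1]$ simply make explicit what the paper's one-line sketch leaves implicit.
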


\noindent{\it Proof sketch} As long as $\alpha \in [0, 1]$, by Eq.~\ref{eq:reluU}-\ref{eq:linL}, the derived bounds for each dot product are sound over-approximation, hence both approaches are sound.

\section{Experimental Evaluation}
\label{sec:experiments}
We present the experimental evaluation of our proposed approach. We implement our approach based on the library of \crownBAF~\cite{Shi2020Robustness}.  All our code and data have been made publicly available in~\cite{liu_2026_artifact}. Extended evaluation results can be found in~\S{}\ref{sec:horizontal-lse}.

\subsection{Experimental Setup}
\label{sec:experimentsSetup}

\smallskip\noindent\textit{Datasets}
We evaluate our verification approaches with the models trained on  two datasets that have been widely-used for sentiment analysis, namely, the Stanford Sentiment Treebank (\sst)~\cite{zhang2015character} and the Yelp polarity dataset (\yelp)~\cite{socher2013recursive}. We leave more details about these two datasets, including the number of sentences used for training, validation, and testing, in~\S{}\ref{sec:extendedExperimentSettings}.

\begin{table}[!tb]
\centering
\caption{Details of adopted models} 
\label{table:modelDetail}
\setlength{\tabcolsep}{8pt}
      \begin{tabular}{lcc}
			\toprule
			Parameters      & Standard  & TinyBERT    \\
			\midrule
			\# Layers       & 1, 2, 3, 6 & 4   \\
			Attention heads & 8     & 12       \\
			Activation      & ReLU  & ReLU      \\
			Hidden size     & 512   & 312/1200      \\
          \multirow{2}{*}{ Accuracy (\%)}  & 82.3$\sim$83.2 (\sst)  & 80.9 (\sst) \\
          & 91.2$\sim$91.4 (\yelp)  & 91.5 (\yelp)   \\
			\bottomrule
		\end{tabular}
\end{table}
\smallskip\noindent\textit{Model Architectures}
Our experiments utilize transformer  architectures consisting of encoder layers only, following~\cite{Shi2020Robustness}, as detailed in Table~\ref{table:modelDetail}.
We adopt standard transformers  consisting of different numbers of encoder layers for both  \sst and \yelp; the number \modelLayer of model layers can be $\modelLayer\in \{1, 2, 3, 6\}$.
The models share the same number of attention heads (8 heads), activation functions (ReLU), and employ normalization without variance calculation.
The hidden sizes for both attention and fully-connected layers are fixed to be 512.
To exemplify the capability of our approaches in more practical contexts,  
we also evaluate our approach on TinyBERT~\cite{jiao-etal-2020-tinybert}, a compact encoder-only transformer that has been widely adopted in NLP for language model pre-training~\cite{bhargava-etal-2021-generalization,devlin-etal-2019-bert,NIPS2017_3f5ee243,xiao2023introductiontransformersnlpperspective,zhu2022robustness}.

\smallskip\noindent\textit{Robustness Properties}
In NLP, robustness is an important property for models' reliability; however, threats to robustness may arise from various types of attacks, such as synonym substitution~\cite{chiang-lee-2023-synonym} and tokenization variations~\cite{wegmann-etal-2025-tokenization}. Certifying robustness can eliminate the concern about these attacks within a given $\epsilon$-ball and ensures the reliability of model deployment; therefore in our evaluation, we assess the effectiveness of \tool in certifying models' robustness. 
Specifically, we adopt $L_1$ norm as the distance metric for inputs, while our approach is not limited to this norm but also works for others, in line with~\cite{Shi2020Robustness}. 
We adopt single-word perturbations following~\cite{Shi2020Robustness}, and
for each model,  we evaluate on 140 tasks, resulting in 1400 verification tasks in our evaluation in total.

\smallskip\noindent\textit{Research Questions} We evaluate our proposed approaches by answering the following three research questions:
\begin{compactitem}
    \item {\bf RQ1:} Can our approaches achieve higher  precision compared to the baseline?
    \item {\bf RQ2:} Is changing $\alpha$ useful to refine the approximation?
    \item {\bf RQ3:} How efficient are our approaches?
\end{compactitem}

\smallskip\noindent\textit{Baseline and Evaluation Metrics} To evaluate the performances of our approach, we select \crownBAF~\cite{Shi2020Robustness} as our baseline approach, and compare with it according to the following two metrics:
\begin{compactitem}
    \item {\it Maximal verified $\epsilon$}: As both the baseline and our approach are based on approximation, given an $\epsilon$, both approaches can throw false alarms. Nevertheless, as both approaches are sound, the larger an $\epsilon$ can be verified, the higher precision the approach can achieve. For the search of $\epsilon$, we follow literature~\cite{Shi2020Robustness} and perform a binary search with details left in~\S{}\ref{sec:binarySearch}.
    \item {\it Time costs}: For each approach, we record the time costs to reach the verification result, as an indicator for the efficiency of the approach. 
\end{compactitem}

\smallskip\noindent\textit{Hardware Settings}
All experiments are conducted using two NVIDIA A6000 GPUs operating in parallel.
Our implementations are built upon TensorFlow 1.14 and PyTorch 1.13, with CUDA version 10.2.

\subsection{Evaluation Results}
\researchquestion{Can our approaches achieve higher  precision compared to the baseline?}

\begin{figure}[!tb]
\centering
\begin{subfigure}[b]{0.3\textwidth}
\includegraphics[width=\textwidth, alt = {\sst, $\modelLayer = 1$}]{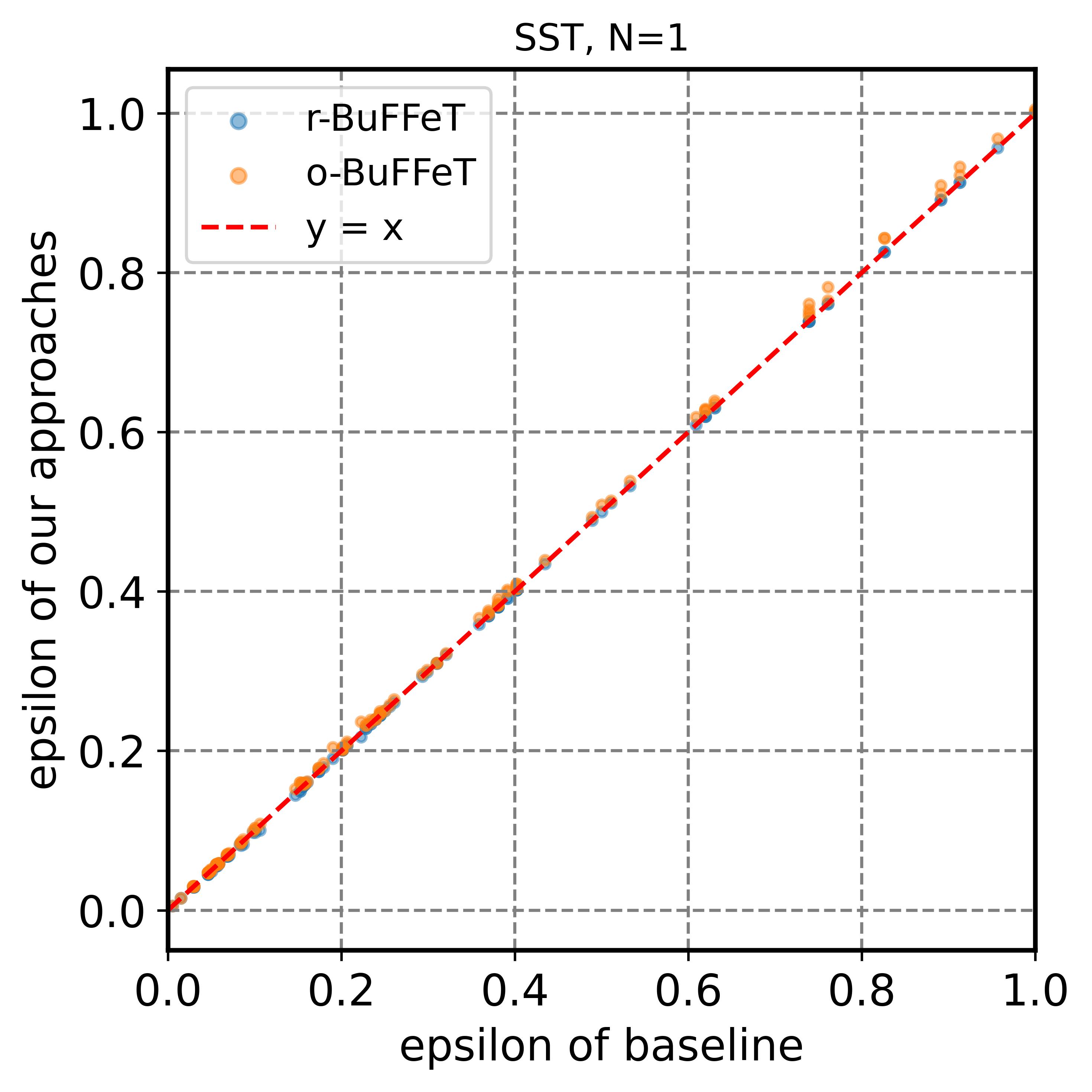}
\end{subfigure}
\quad
\begin{subfigure}[b]{0.3\textwidth}
\includegraphics[width=\textwidth, alt = {\sst, $\modelLayer = 2$}]{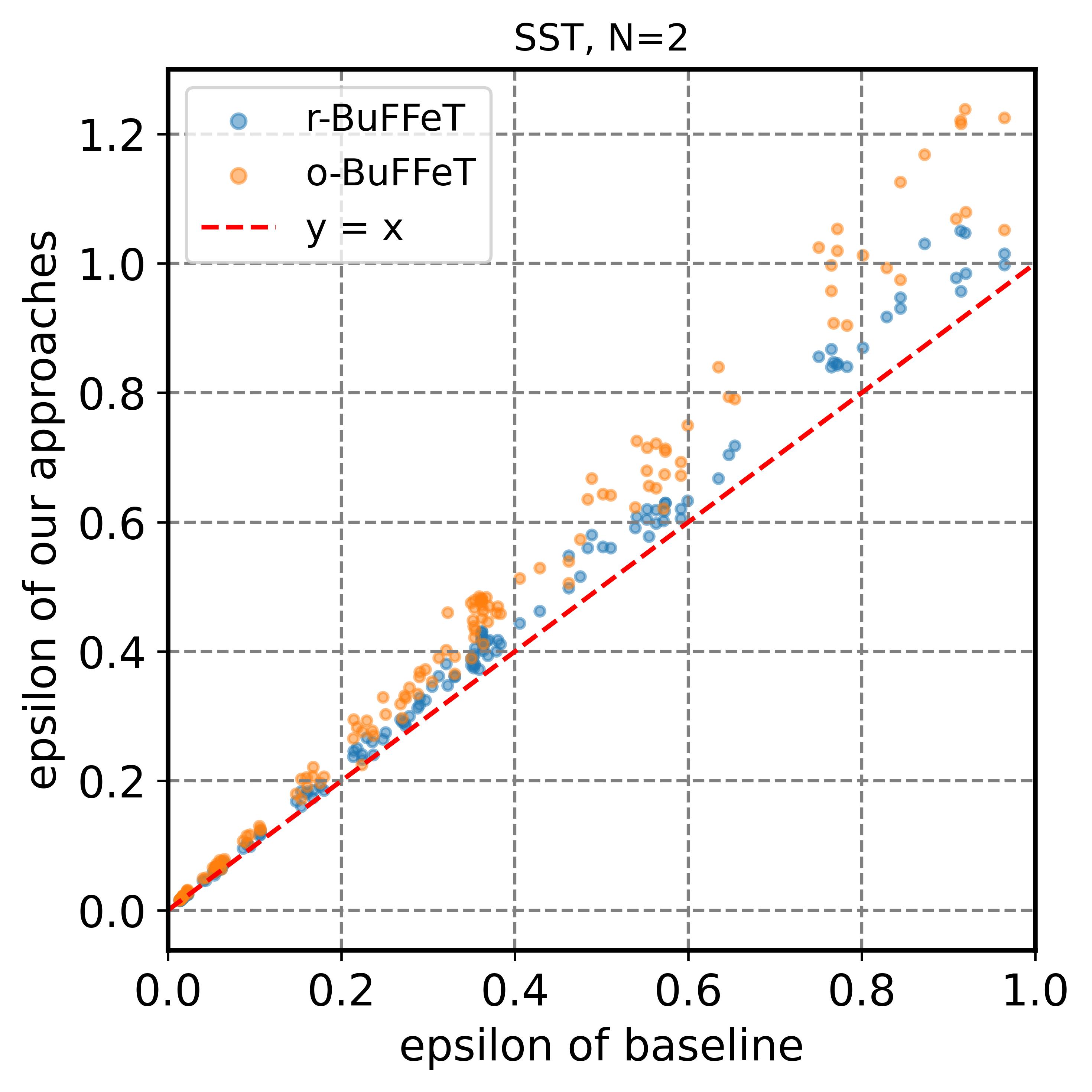}
\end{subfigure}
\quad
\begin{subfigure}[b]{0.3\textwidth}
\includegraphics[width=\textwidth, alt = {\sst, $\modelLayer = 3$}]{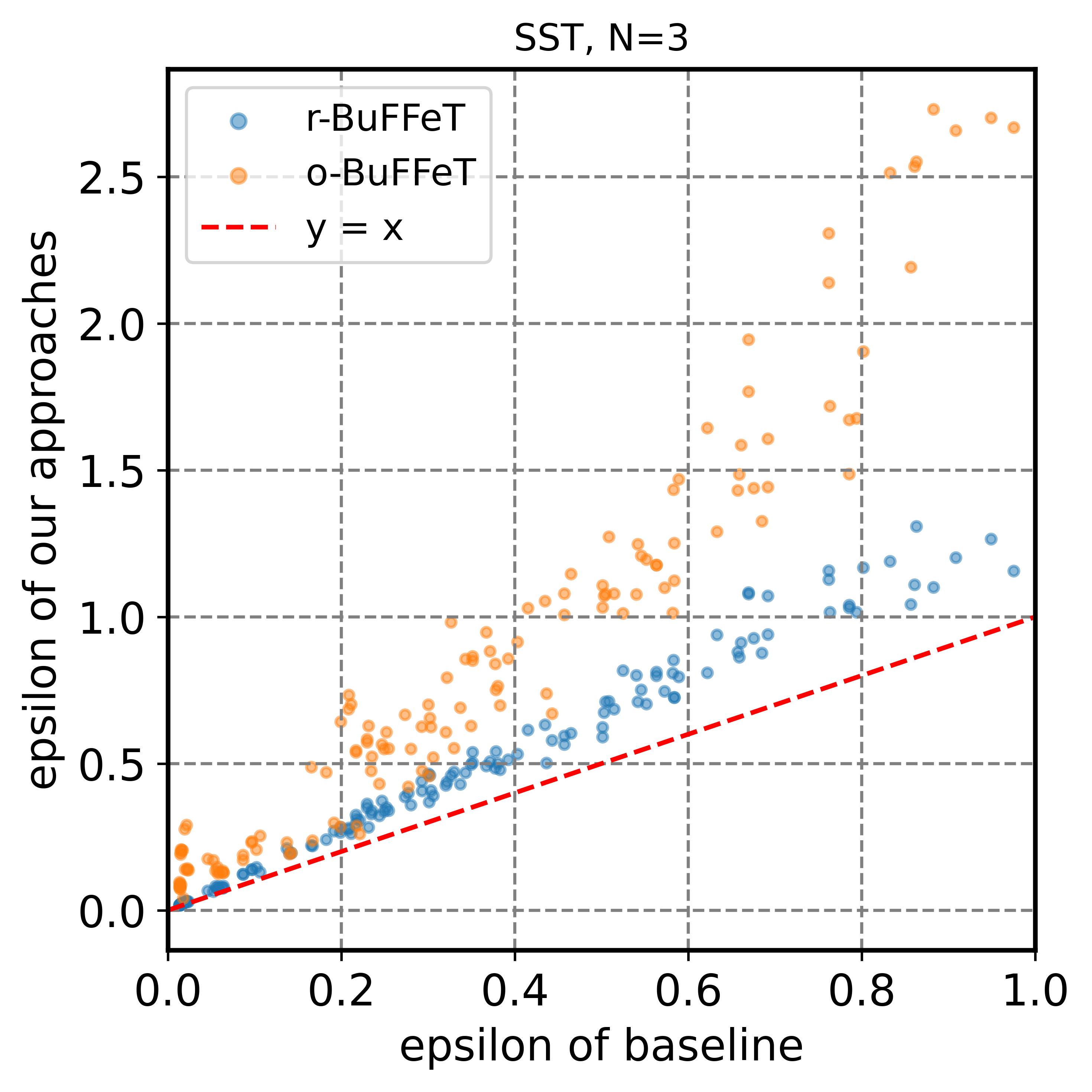}
\end{subfigure}
\begin{subfigure}[b]{0.3\textwidth}
\includegraphics[width=\textwidth, alt = {\sst, $\modelLayer = 6$}]{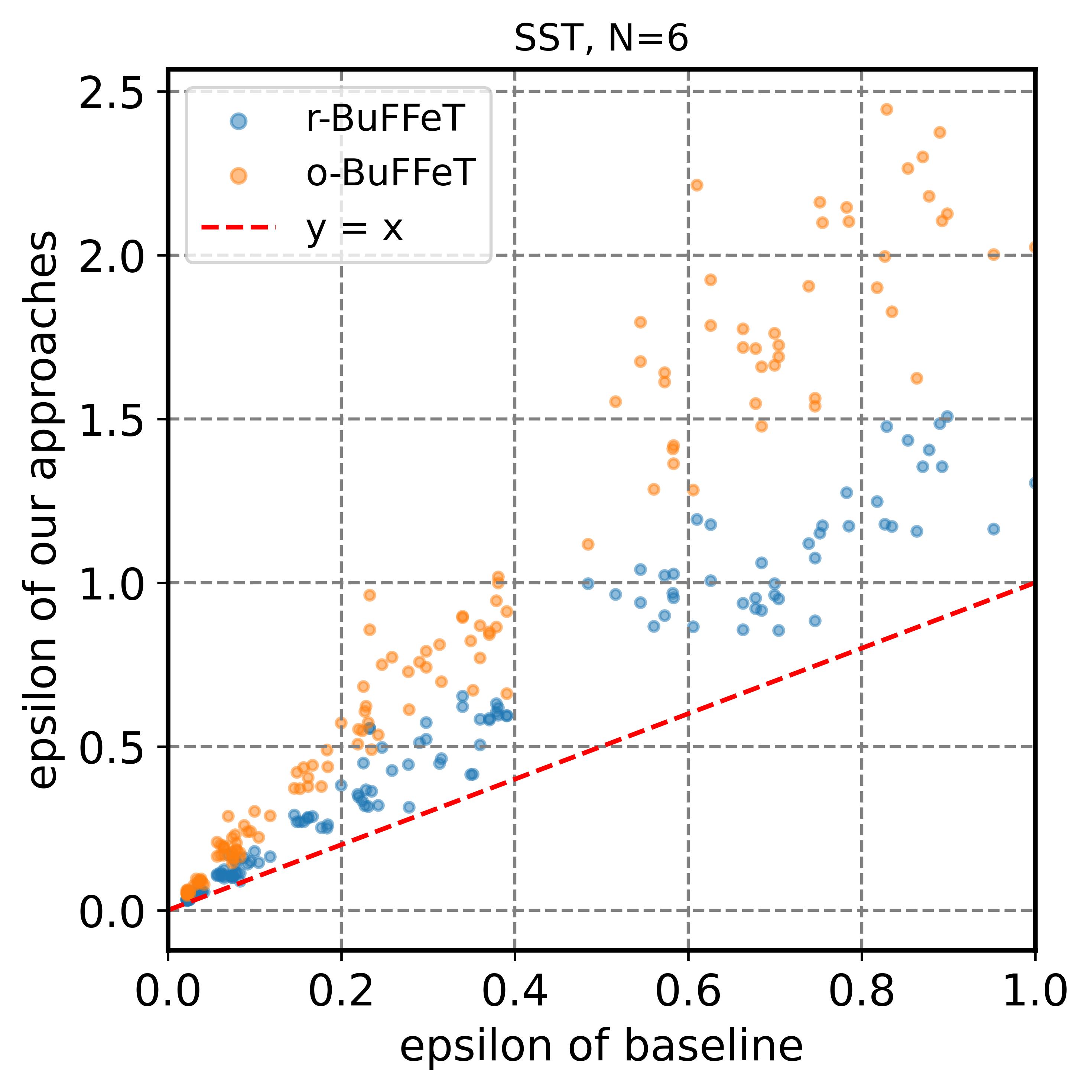}
\end{subfigure}
\quad
\begin{subfigure}[b]{0.3\textwidth}
\includegraphics[width=\textwidth, alt = {\sst, TinyBERT}]{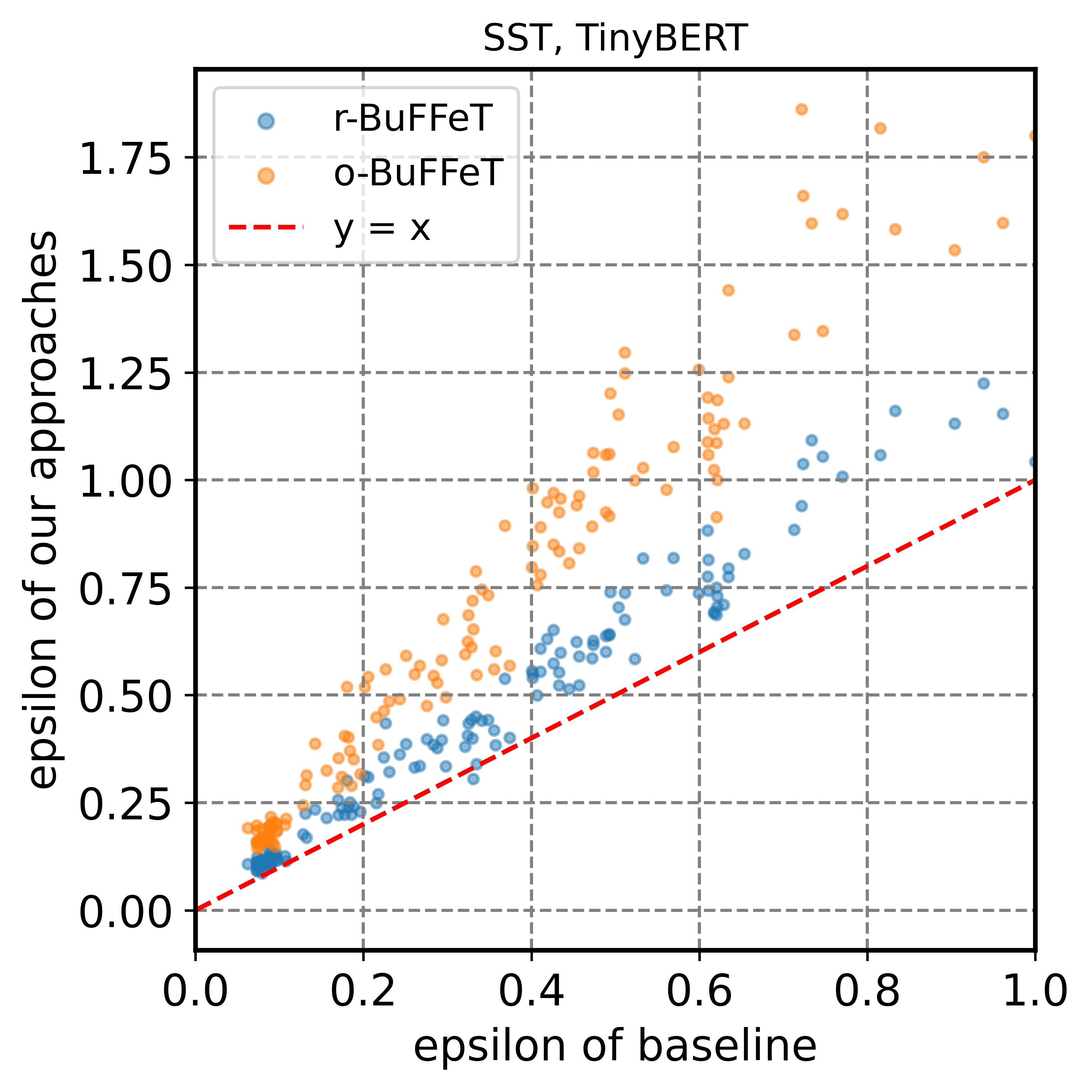}
\end{subfigure}

\begin{subfigure}[b]{0.3\textwidth}
\includegraphics[width=\textwidth, alt = {\yelp, $\modelLayer = 1$}]{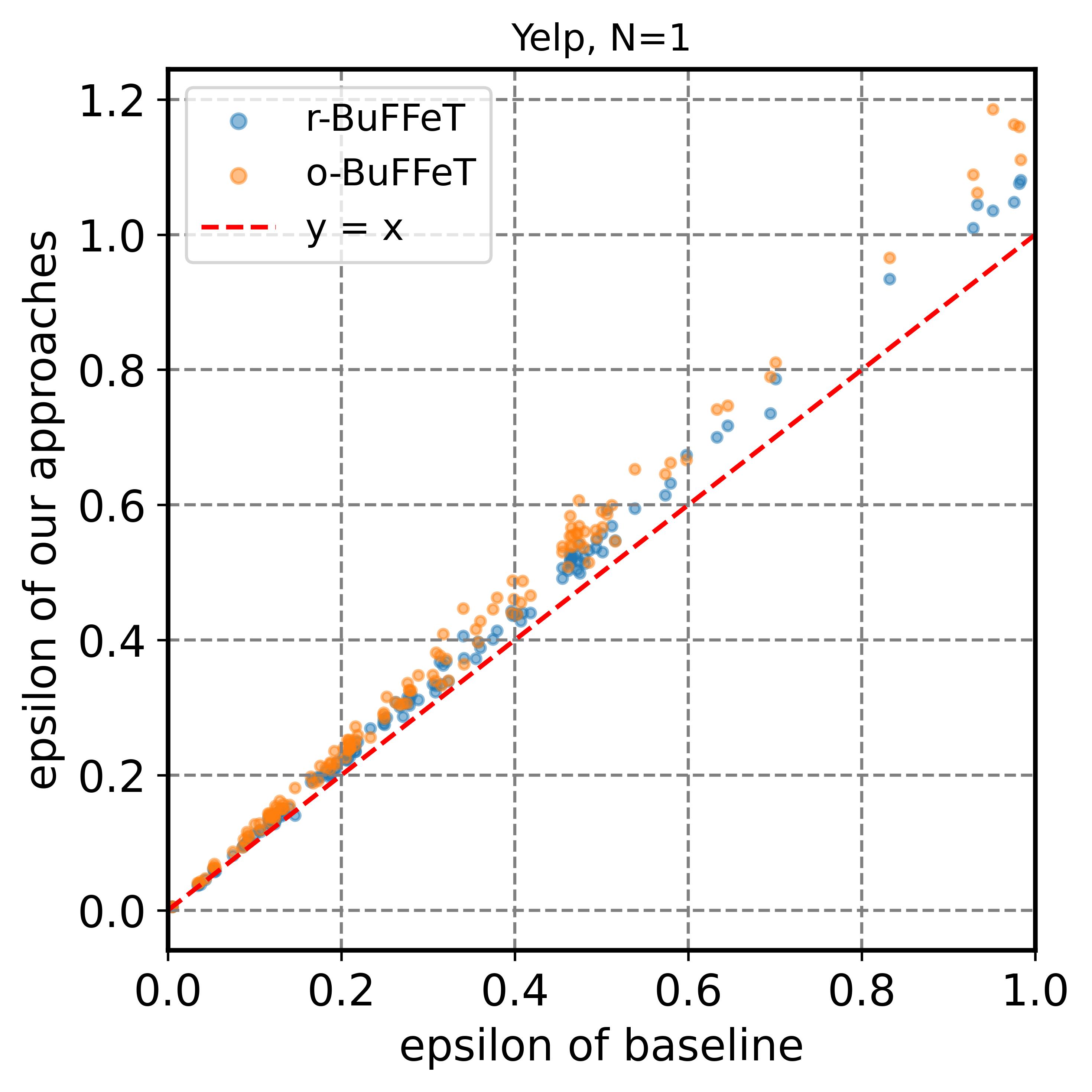}
\end{subfigure}
\quad
\begin{subfigure}[b]{0.3\textwidth}
\includegraphics[width=\textwidth, alt = {\yelp, $\modelLayer = 2$}]{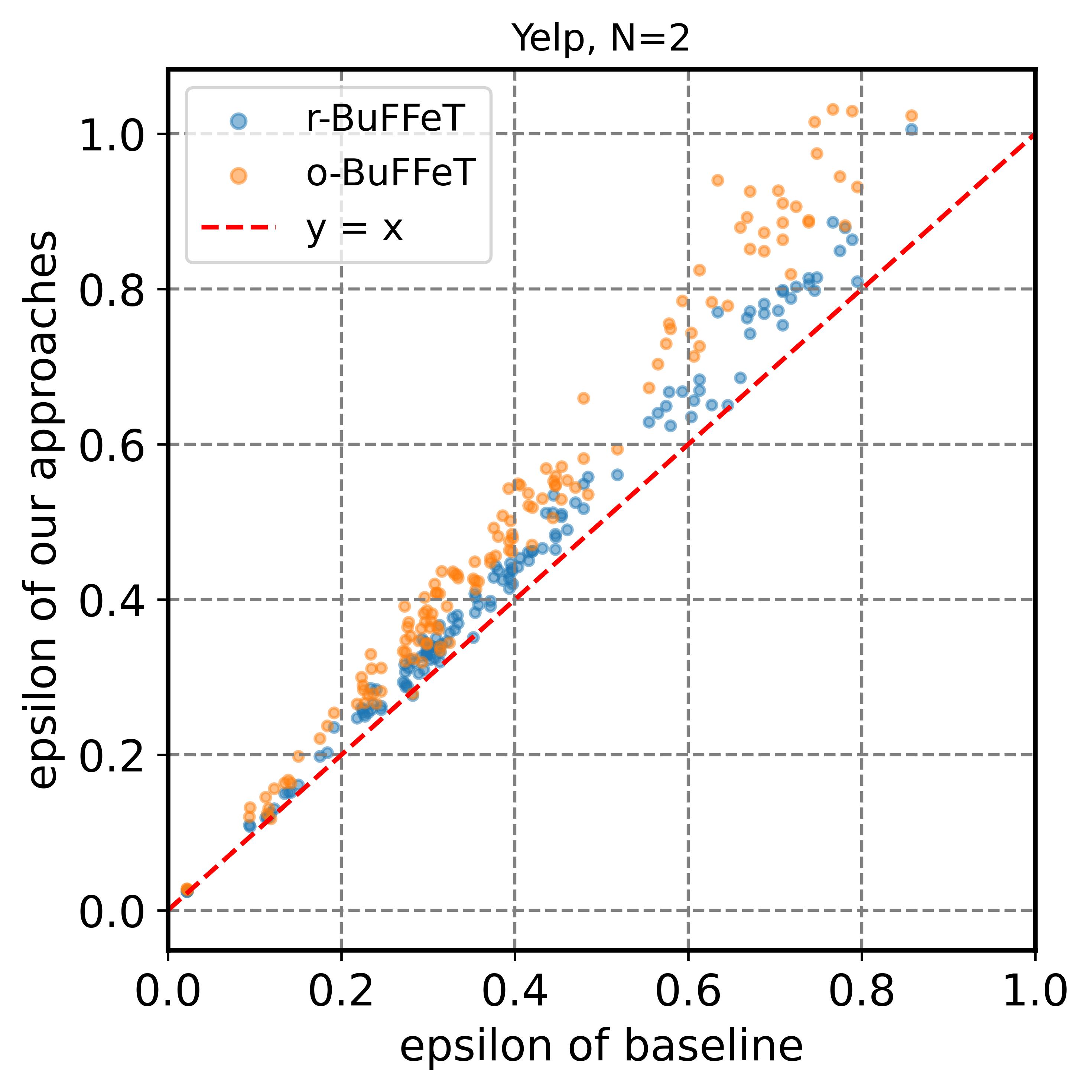}
\end{subfigure}
\quad
\begin{subfigure}[b]{0.3\textwidth}
\includegraphics[width=\textwidth, alt = {\yelp, $\modelLayer = 3$}]{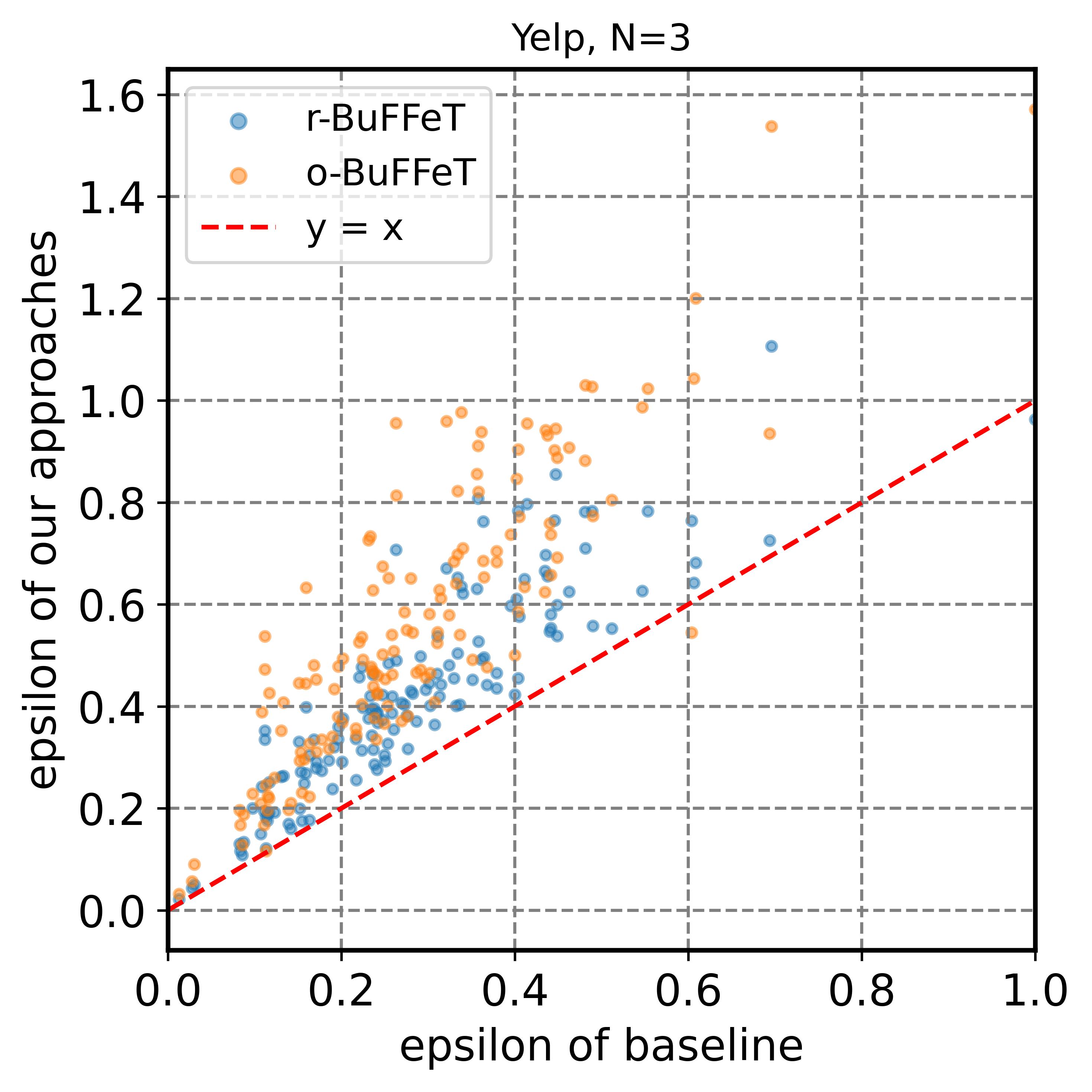}
\end{subfigure}
\begin{subfigure}[b]{0.3\textwidth}
\includegraphics[width=\textwidth, alt = {\yelp, $\modelLayer = 6$}]{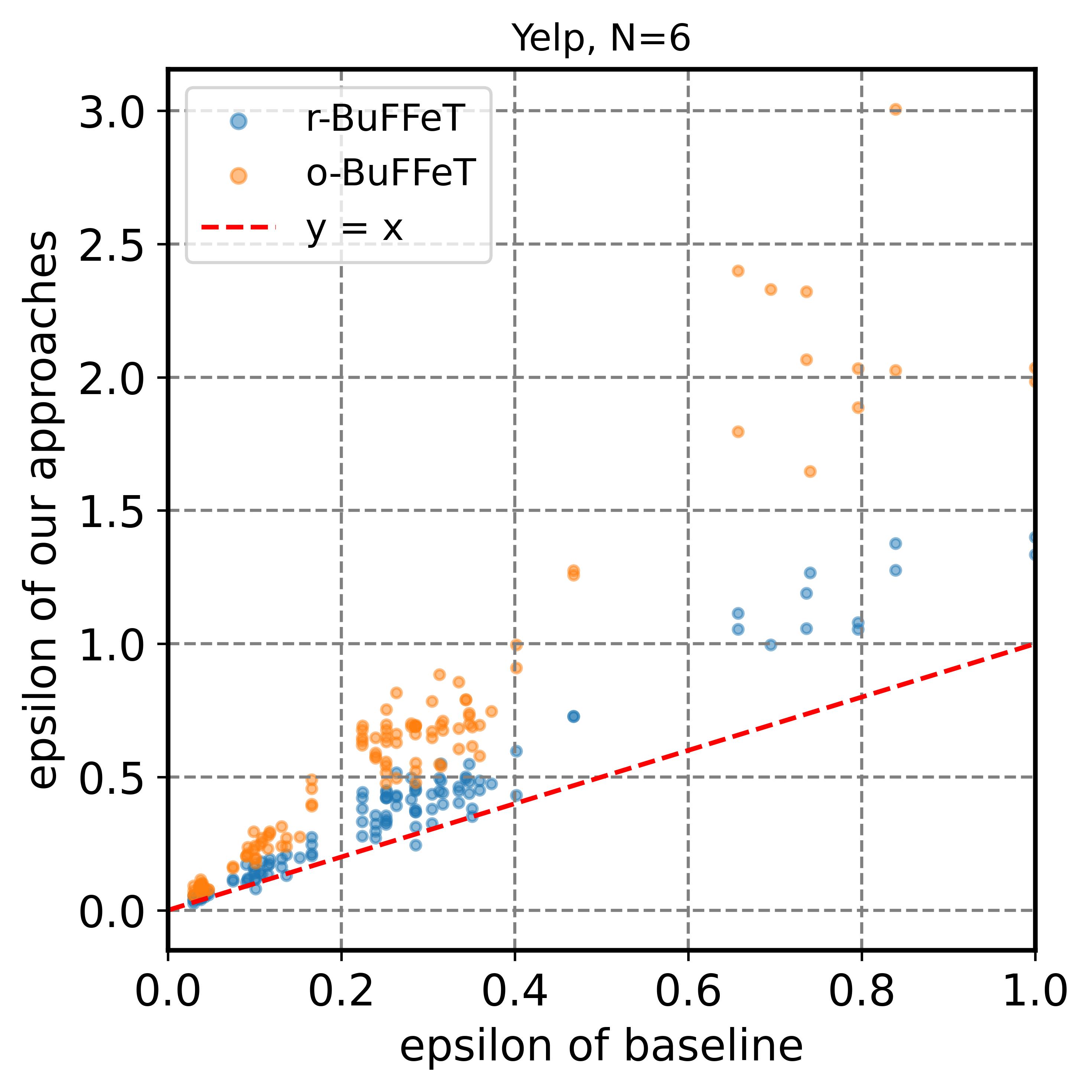}
\end{subfigure}
\quad
\begin{subfigure}[b]{0.3\textwidth}
\includegraphics[width=\textwidth, alt = {\yelp, TinyBERT}]{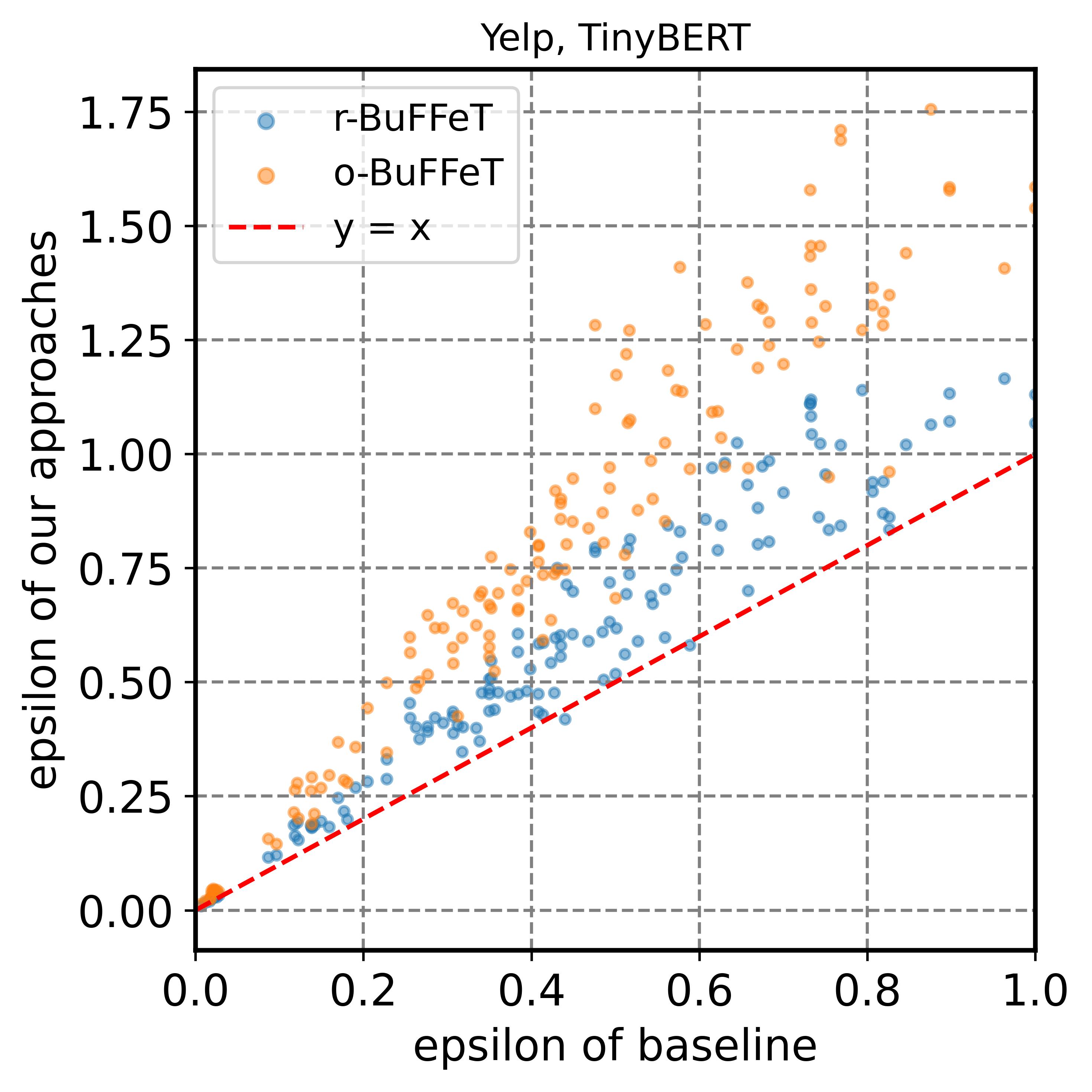}
\end{subfigure}
    \caption{Comparison with the baseline approach in terms of maximal verified $\epsilon$. The red line in each sub-figure, i.e., $y=x$, signifies the equivalent performance between different approaches, so the points on its top left are the cases our approaches outperform the baseline. Steeper points signify higher outperformance. 
    }
    \label{fig:scatterPlot}
\end{figure}

\begin{figure}[!tb]
    \centering

    \begin{minipage}[b]{0.48\textwidth}
        \centering
        \includegraphics[width=\textwidth, alt = {Box plot for \sst}]{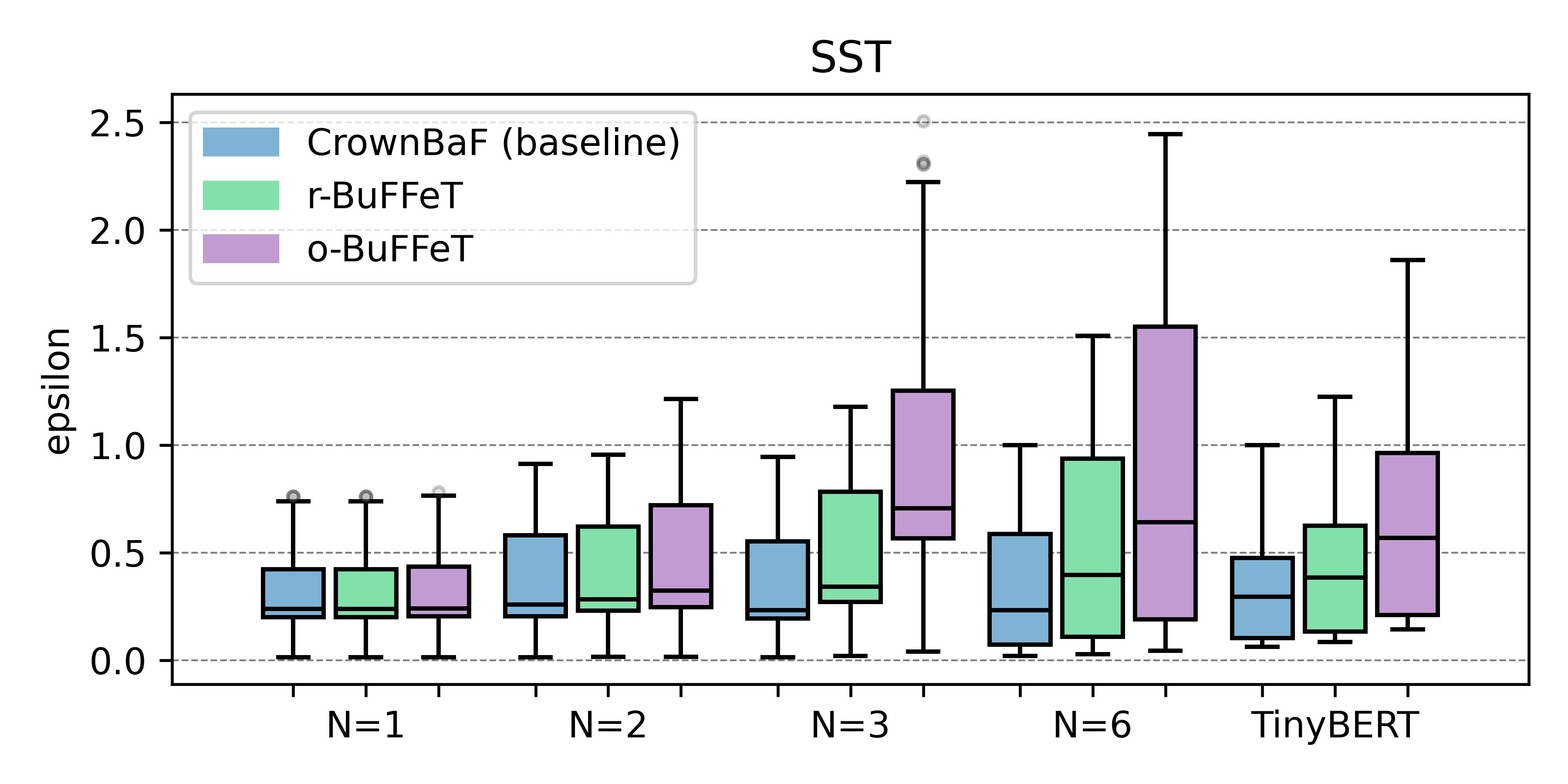}
    \end{minipage}
    \hfill
    \begin{minipage}[b]{0.48\textwidth}
        \centering
        \includegraphics[width=\textwidth, alt = {Box plot for \yelp}]{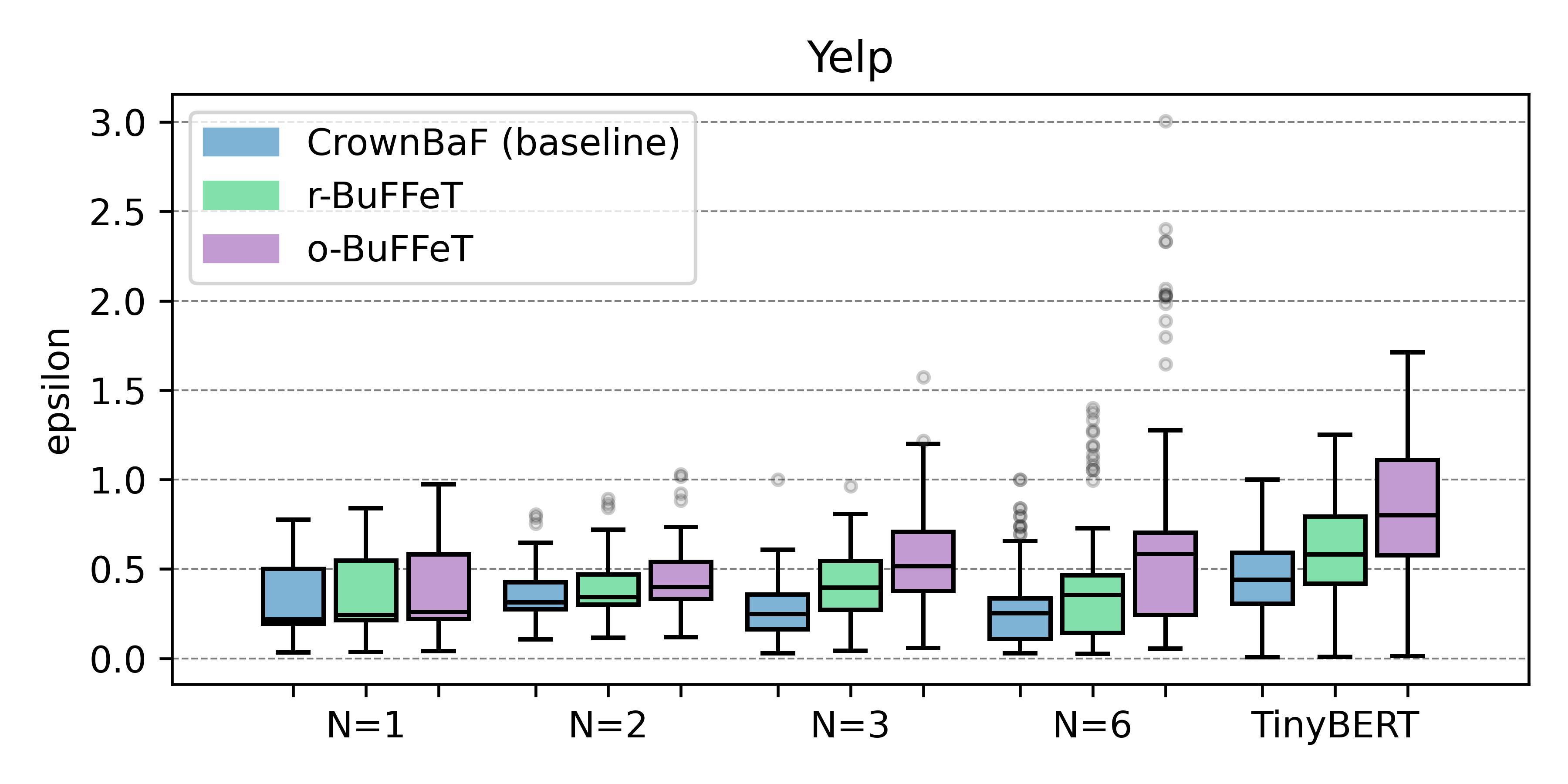}
    \end{minipage}
    \caption{The statistical information about the performances of the three approaches. The $y$-axis shows the maximal $\epsilon$ that can be verified by each approach. }
    \label{fig:boxPlot}
\end{figure}

In Fig.~\ref{fig:scatterPlot}, we present a comparison of \tool with the baseline approach. In each sub-figure, each point denotes a verification task; the $x$-axis denotes the maximal $\epsilon$ verified by the baseline, and the $y$-axis denotes the maximal $\epsilon$ verified by our approach. For better visualization, we have normalized the values of the maximal $\epsilon$ of the baseline approach to $[0, 1]$, by performing a linear rescaling. The red line (i.e., $y = x$) in each sub-figure suggests the performances of the baseline, namely, if a point is located above the the red line, it implies that our approach outperforms the baseline. The more separated a point is from the red line, the greater performance advantage  our approach achieves. Moreover, to show more rigorous statistics (e.g., medians, quartiles, min/max), we also present the box plots in Fig.~\ref{fig:boxPlot} to compare across different approaches. 

\smallskip\noindent\textit{Comparison with Baseline} By Fig.~\ref{fig:scatterPlot}, we observe that, for most of the verification tasks, both of our approaches outperform the baseline approach. In particular, as the number $\modelLayer$ of encoder layers grows, the outperformance becomes more evident. For instance, when $\modelLayer = 1$, both \toolp and \toola perform similarly with the baseline for \sst, and   marginally outperform the baseline for \yelp. When $\modelLayer = 6$, in both \sst and \yelp, the outperformance of \toolp can be as great as 1.8 times, and the outperformance of \toola can be 3.6 times. This observation also applies to TinyBERT, for which the outperformance can be as much as about 2.7 for both \sst and \yelp; as the TinyBERT we adopted has 4 layers, its performance advantage is similar to the standard transformer with $\modelLayer = 3$ or $6$. Such growth of outperformance w.r.t. the model depth can also be evidenced by the box plots in Fig.~\ref{fig:boxPlot}, which shows that the performances of different approaches are comparable when $\modelLayer = 1$, while our approaches outperform the baseline when $\modelLayer = 6$. This is reasonable, because as the number of encoder layers grows, our approaches can bring more approximation refinement, which leads to greater performance advantages. By such refinement, our approaches significantly improve the verification ability, enabling the certification of greater safety boundaries.    

\smallskip\noindent\textit{Comparison between \toolp and \toola} By Fig.~\ref{fig:scatterPlot}, we can also compare the performances between \toolp and \toola. While both of the approaches outperform the baseline, we can find that in most cases \toola performs better than \toolp, which is often able to certify greater perturbations. This can be shown by Fig.~\ref{fig:scatterPlot}, in which the points of \toola are mostly located above the points of \toolp, across all different benchmarks including TinyBERT. Moreover, the performance advantages also grow as model depths increase, signifying the strengths of \toola in handling complex transformer models. This observation is also evidenced by the box plots in Fig.~\ref{fig:boxPlot}, in which we find that the medium of \toola outperforms \toolp stably, and the maximum of \toola outperforms \toolp significantly, as the model depths grow. 

\smallskip\noindent\textit{Underperformance of \tool} 
By Fig.~\ref{fig:scatterPlot}, we also observe some cases when \tool underperforms the baseline.
\begin{compactitem}
    \item {\it For \toolp}: There are a few cases when \toolp underperforms the baseline, e.g., $\modelLayer = 6$ for \yelp, and TinyBERT for both \sst and \yelp. This is expected, because the selection strategy of \toolp is essentially a heuristic rule that selects $\alpha$ based on the area of approximation~\cite{singh2019abstract}, but existing study~\cite{zhang2022provably} has shown that this strategy may not always work. 
    \item {\it For \toola}: We observe one instance (\yelp, \modelLayer=3) when \toola underperforms the baseline, which demonstrates that the iterative optimization can still fail to find a solution better than the baseline. It's worth noting that, despite the large-scale experiments, we only observe the occurrence of one such instance, which demonstrates the empirical effectiveness of \toola.
\end{compactitem}

\researchquestion{Is changing $\alpha$ useful to refine the approximation?}
\begin{figure}[!tb]
    \centering
    \begin{minipage}[b]{0.48\textwidth}
        \centering
        \includegraphics[width=\textwidth, alt = {change of \margin Example 1}]{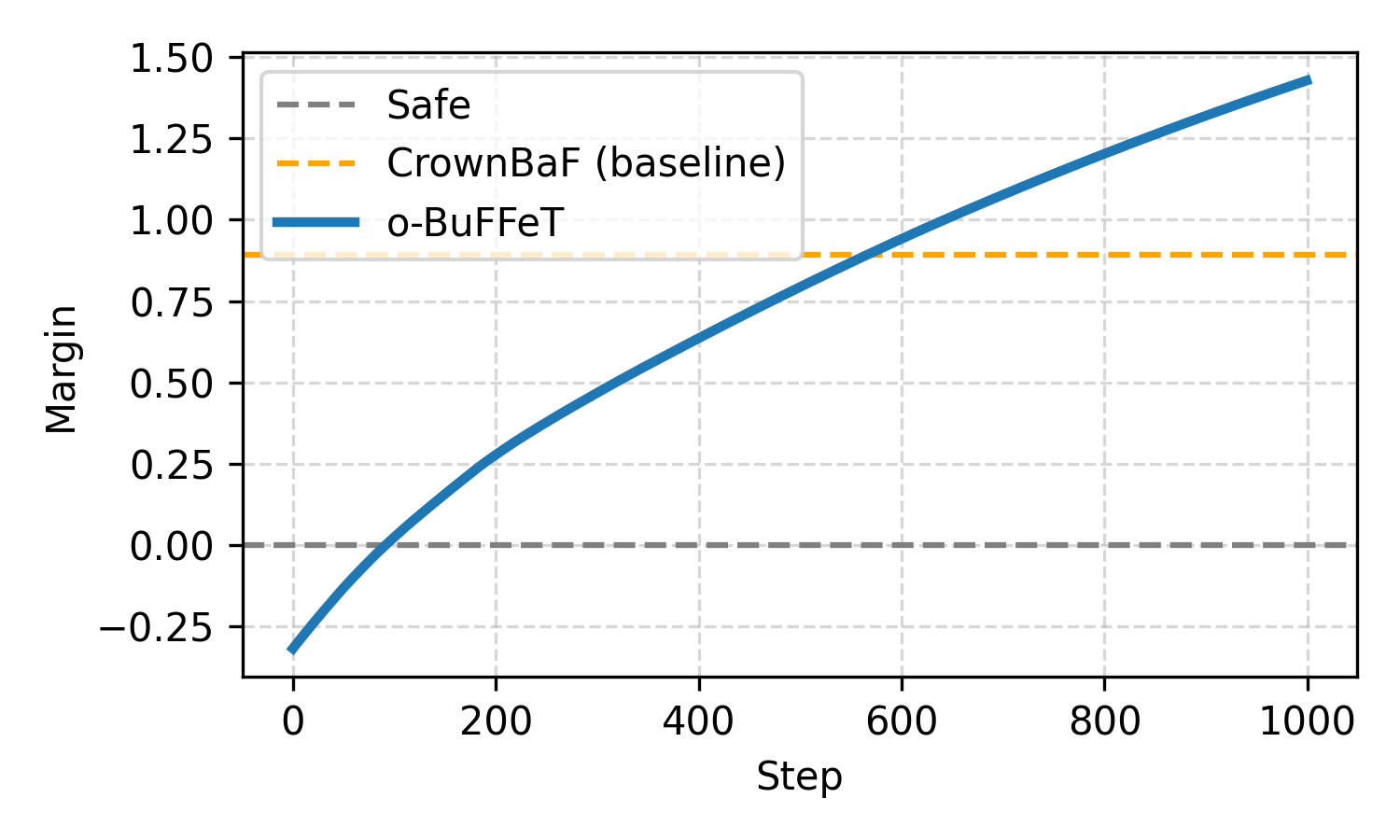}
    \end{minipage}
    \hfill
    \begin{minipage}[b]{0.48\textwidth}
        \centering
        \includegraphics[width=\textwidth, alt = {change of \margin Example 2}]{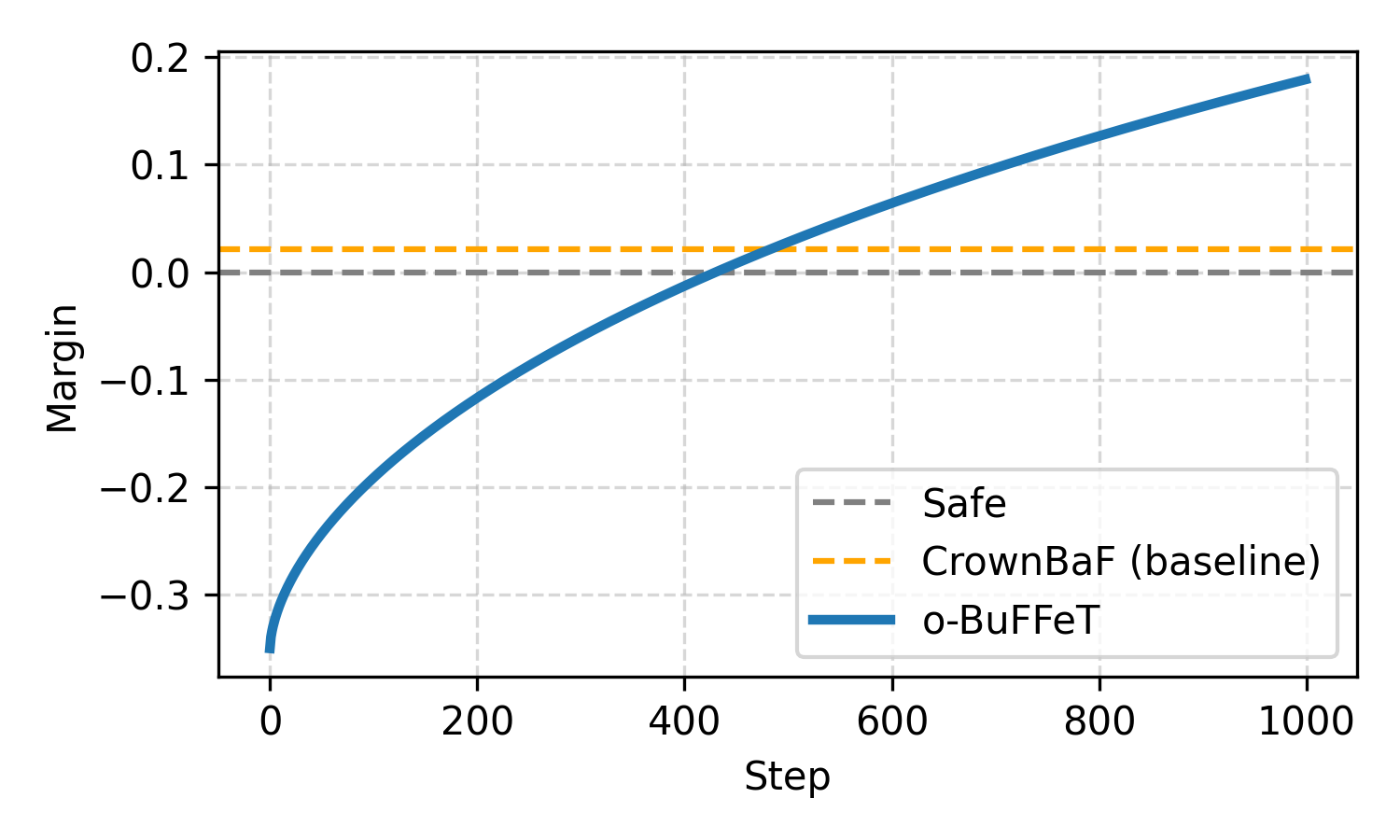}
    \end{minipage}
    
    \begin{minipage}[b]{0.48\textwidth}
        \centering
        \includegraphics[width=\textwidth, alt = {change of \margin Example 3}]{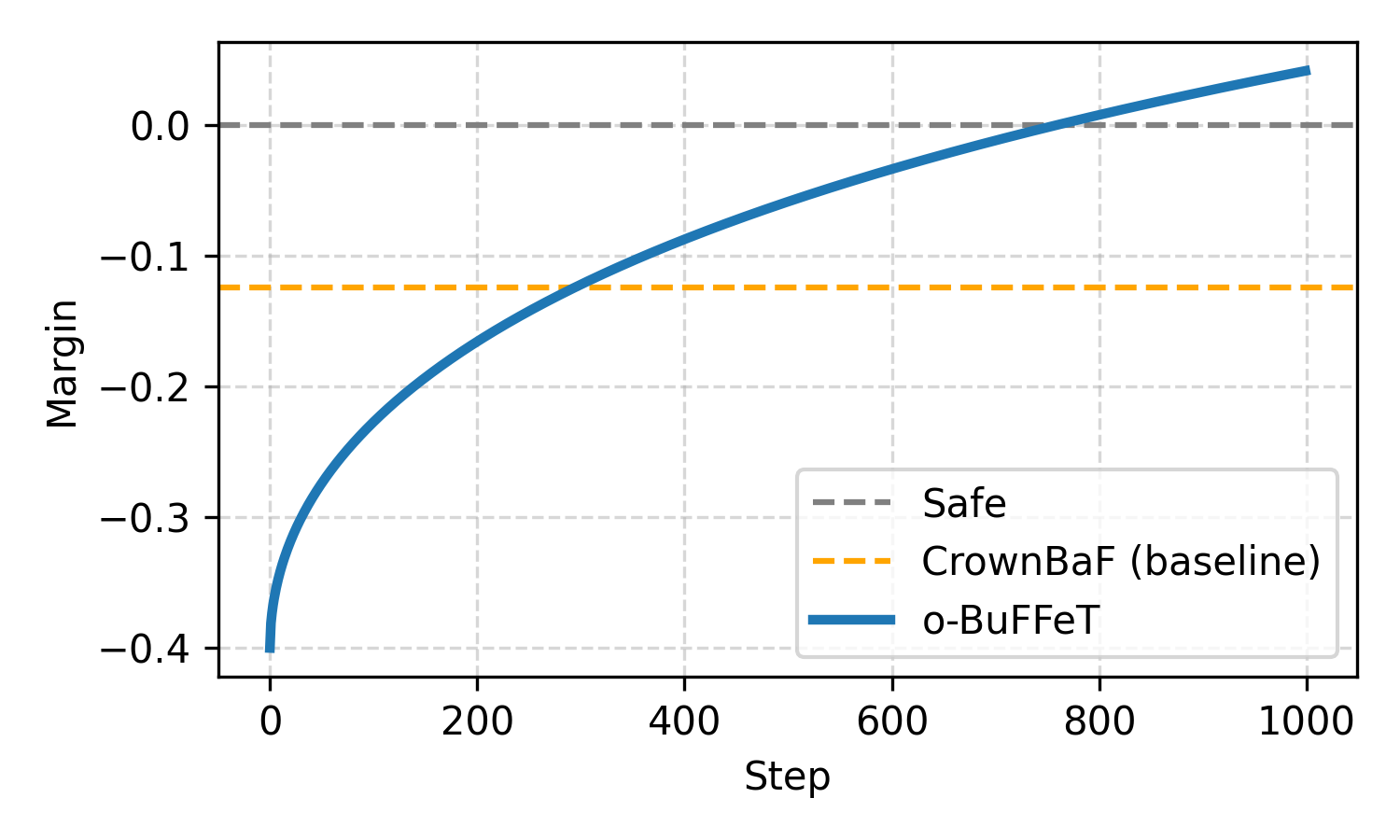}
    \end{minipage}
    \caption{The change of \margin over the optimization process of \toola}
    \label{fig:change}
\end{figure}

In Fig.~\ref{fig:change}, we show how \margin (see Eq.~\ref{eq:margin}, which implies the degree of robustness satisfaction in a verification problem) changes throughout a verification process with \toola (i.e., the goal here is to verify a given $\epsilon$, rather than finding maximal $\epsilon$ as done in RQ1). Essentially, over the optimization process, the value of \margin should keep growing, and once it becomes positive, it signifies that the problem is verified.  Specifically, we select three verification problems from \yelp and rerun the verification using \toola to demonstrate this process. In each plot, we mark the line of $0$. Normally, \toola will be terminated as soon as $\margin$ crosses this line; for presentation, we do not terminate \toola at that point but we always terminate \toola after 1000 iterations. Moreover, we also mark the \margin of the baseline as a reference. 

By Fig.~\ref{fig:change}, we observe the impact of changing $\alpha$ to the final verification results, as well as the usefulness of the optimization strategy in \toola. In particular, in the first plot, \toola starts with a randomly initialized $\alpha$, which introduces overly imprecise approximation and thus cannot lead to verification of the given $\epsilon$. Then, \toola optimizes $\alpha$ to refine the approximation, and as it proceeds, the \margin value crosses the $0$ line with around 100 steps, hence verifying the given $\epsilon$. As the optimization process goes on, the approximation gets further refined and the \margin keeps growing. At around 600 step, \toola crosses the \margin of the baseline, and still continues growing after that. This observation suggests that, the optimization strategy allows \toola to deal with verification tasks with different levels of difficulties; in particular for difficult problems, \toola offers the possibility of solving them if more time budget could be given.

In the third plot, the \margin given by the baseline is negative, signifying that it cannot solve the problem. In contrast, \toola keeps refining the approximation and finally the \margin crosses $0$ with around 800 steps. This example demonstrates a more useful usage scenario of \toola: in case the fixed verification results given by the baseline approach are not sufficient to solve the problem, 
the optimization process of \toola offers the ability to iteratively refine the approximation until the verification problem is solved. For complex transformer models and difficult verification problems, \toola can be much more useful to eliminate false alarms and deliver reliable verification results.

\researchquestion{How efficient are \toolp and \toola?}

In Table~\ref{table:timeComparison}, we compare our approach with the baseline in terms of time costs. For each model, we report the average time costs of our approaches and the baseline approaches, over all the verification problems with each dataset.

\begin{table}[!tb]
\centering
\caption{Comparison of different approaches in terms of time costs. $\Delta$ denotes the ratio of the time costs of our approach to that of the baseline.
} 
\label{table:timeComparison}
\setlength{\tabcolsep}{6pt}
			\begin{tabular}{lcr|rr|rr}
				\toprule
				\multirow{2}*{Datasets} & \multirow{2}*{\modelLayer} &  \multicolumn{5}{c}{time costs (secs)}\\
                \cline{3-7}
                ~                     &                  ~&  baseline  &    \toolp & $\Delta$    & \toola & $\Delta$ \\
                
				\midrule
                				\multirow{5}*{\sst}     & 1 & 0.06                          & 0.09 &  1.5                     & 5.77  & 96.2              \\
				~                       & 2 & 0.17                           &  0.26   & 1.5                    & 14.70     &     86.5            \\
				~                       & 3 & 1.05                           & 1.76  & 1.7                     & 101.72   & 96.9           \\ 
                ~                       & 6 & 11.38                           & 12.57 & 1.1                      & 973.50    & 85.5            \\ 
                \cline{2-7}
                ~                       & {\scriptsize{TinyBERT}} & 4.30                          & 5.27  & 1.2                     & 281.07    &   65.4          \\ 
                \midrule
                \multirow{5}*{\yelp}    & 1 & 0.08                          & 0.13 & 1.6                    & 2.68      &  33.5          \\
				~                       & 2 & 0.17                        & 0.23   & 1.4                   & 12.97    &    76.3          \\
				~                       &3 & 0.92                       & 1.26    & 1.4                   & 77.61       &   84.4        \\
                ~                       & 6 & 8.05                           & 9.43   & 1.2                    & 673.42   &   83.7          \\ 
                \cline{2-7}
                ~                       & {\scriptsize{TinyBERT}} & 2.29                          & 3.34  & 1.5                     & 175.38     &     76.6     \\ 
				\bottomrule
			\end{tabular}
\end{table}
By the results, first we observe that, the efficiency of \toolp is comparable with the baseline approach, but is slightly worse than the baseline in average, as evidenced by the ratio of the time cost, i.e., between 1.1 and 1.7.
This is expected, because while the baseline takes a fixed policy to compute linear bounds, \toolp spends additional time on computing the ranges of ReLU inputs (as described in~\S{}\ref{sec:ruleBased}) to derive more precise bounds.  
Nevertheless, given the improvement of \toolp in certified $\epsilon$, this trade-off is acceptable in practical scenarios, and moreover, the efficiency compromise is not significant and the time cost of \toolp is in the same order of magnitude with the baseline. 

Moreover, we can see that \toola introduces more evident overhead. Compared to the baseline, it costs 33.5 to 96.9 times more to find the maximal $\epsilon$. These costs can be attributed to the iterative optimization process; since each iteration of the optimization essentially consists in a linear propagation that costs roughly the same as the baseline, it indicates that \toola in average takes these numbers of iterations for optimization, leading to the additional time costs.

However, we also have the following observations regarding the results:
\begin{inparaenum}[1)]
    \item 
    verification for 6-layer transformers and TinyBERT, i.e., the models that have been widely deployed in practice, can be finished in minutes. Given the significant precision advantage of \toola, the performance overheads could be acceptable in many use cases; 
    \item despite the high time costs, the ratio of \toola to the baseline is relatively stable (roughly two orders of magnitude for all the models) and does not increase with the growth of $\modelLayer$. As explained previously, this can be attributed to the iterative search by optimization, which takes a (relatively) stable number of iterations to solve the problems.  Given the high efficiency of the baseline, it suggests that \toola should also be scalable for more complex models;
    \item as the time costs are closely relevant to the number of iterations for optimization, the optimizer selection and configuration, as well as the hardware environments for optimization, could matter. To this end, the efficiency of \toola could be further strengthened in practice, given better environments or domain expertise. 
\end{inparaenum}

\section{Related Work}
\smallskip\noindent\textit{Classic Neural Network Verification}
Verification for feed-forward networks has been studied widely; overviews can be found in recent surveys~\cite{liu2021algorithms,Konig2024Critically,Boudardara2024Review} and the competition reports~\cite{brix2023first,kaulen20256th,brix2024fifth}. These works can be split into two families: \emph{complete} methods and \emph{incomplete} methods. Complete methods aim to prove a property by exhaustive exploration of state space, often with SMT~\cite{katz2017reluplex}, MILP~\cite{Tjeng2019MILP} or branch and bound~\cite{bunel2020branch};  they are more expensive but precise. Incomplete methods use abstract interpretation~\cite{gehr2018ai2,singh2019abstract,zhang2018efficient} to build convex bounds layer by layer, which scales better to large models.
Typical bounding methods include DeepPoly~\cite{singh2019abstract}, CROWN~\cite{zhang2018efficient}, which construct linear upper/lower bounds for nonlinear layers (e.g., ReLU) and propagate them through the network.
Refinement on top of these relaxations (e.g., via branch-and-bound~\cite{bunel2020branch,shi2024neural}) further tightens bounds when needed, achieving both efficiency and completeness. Our approaches are in line with them, but we target transformer verification that remains underexplored.

\smallskip\noindent\textit{Verification of Transformers}
Compared with classic neural network verification, transformer verification is new and more challenging because self-attention introduces bilinear terms and softmax.
Existing works extend abstract-interpretation-style relaxations to attention. Shi et al.~\cite{Shi2020Robustness} give a foundational framework that designs linear bounds for transformer components, including a planar relaxation for dot products in self-attention, and then propagates these bounds end-to-end.
Later, Bonaert et al.~\cite{bonaert2021fast} study faster and more precise certification techniques for transformers, which introduces a new multi-norm zonotope abstract domain. 
Another study~\cite{liao2023transformers} considers \emph{sparsemax}-based transformers and reduces robustness verification to a Mixed Integer Quadratically Constrained Programming (MIQCP) problem. This enables exact reasoning but is less scalable.
Zhang et al.~\cite{zhang2024galileo} propose a general linear relaxation for multi-dimensional functions such as softmax. Meanwhile, Wei et al.~\cite{pmlr-v206-wei23c} give convex lower bounds and concave upper bounds for the softmax that fit convex optimization formulations.

Our view is that attention is the core mechanism and dot-products appear everywhere, bringing significant impact to verification results. While the planar dot-product bound of~\cite{Shi2020Robustness} is promising, it may be not sufficiently tight to handle real-world models. To this end, we devise a more precise construction that can significantly reduce over-approximation, leading to tighter certified bounds.

There is also a line of work~\cite{Casadio2023ANTONIO,Casadio2026NLP} that considers the meaningfulness of verifying  transformers in the context of NLP. Normal NLP pipelines involve an embedding function that encodes raw textual words into vectors, so an $\epsilon$-ball input region of a given textual word in embedding space may contain few words that are practically meaningful. Our problem definition in~\S{}\ref{sec:problemStatement} also suffers from this problem, because we consider perturbations to inputs in embedding space but bypass the embedding function; while this brings us the generality to handle continuous input modalities in domains, such as audio processing and computer vision, its direct application to local robustness verification of NLP models may lead to limited practical implication. Nevertheless, various techniques that can handle this issue, have been proposed in literature~\cite{Casadio2023ANTONIO,Casadio2026NLP}, e.g., using a ``hyperrectangle'' to encompass meaningful words; in practice, our approach can be adopted collaboratively with those techniques, and serve as an off-the-shelf verifier in the general pipeline proposed in~\cite{Casadio2026NLP}, with improved precision compared to existing verifiers.

\section{Conclusion and Future Work}
\label{sec:conclusion}
We present an abstract refinement approach for verification of transformers, inspired by linear relaxation of ReLU. We represent the aggregated bound by ReLUs and exploit two strategies to achieve refined linear bounds. \toolp is rule-based, which decides ReLUs' bound based on the input ranges of ReLUs. \toola formalizes the problem as an optimization problem, and calls solvers to iteratively explore the optimal bounds. Our evaluation results demonstrate the strengths of our approach in verification precision.

As future work, we consider two directions:
\begin{inparaenum}[1)]
    \item we can combine the different strategies to  improve efficiency, e.g., initialize $\alpha$ in the optimization of \toola by using the $\alpha$ from the baseline;
    \item we also plan to evaluate our approaches on transformers in other domains, e.g., transformer-based controllers~\cite{katz2022verification}.
\end{inparaenum}

\begin{credits}
\subsubsection{\ackname}
We thank the anonymous reviewers for their valuable comments. H. Liu is supported by JST BOOST Grant No. JPMJBS2406; Z. Zhang is supported by JST BOOST Grant No. JPMJBY24D7 and JSPS Grant No. JP25K21179; J. Zhao is supported by JSPS Grant No. JP24K02920.

\subsubsection{\discintname}
The authors have no competing interests to declare that are relevant to the content of this article.

\subsubsection{Data Availability Statement.}
All relevant data that support the findings of this paper are available in Zenodo~\cite{liu_2026_artifact}.
\end{credits}

\bibliographystyle{splncs04}
\bibliography{references}


\clearpage
\appendix
\section{A Detailed Introduction to Transformers}\label{sec:detailedTransformer}

In our experiments, we use a simplified Transformer-based architecture tailored for sentence-level sentiment classification. The overall model structure is illustrated in Figure~\ref{fig:architecture}. It follows the standard encoder-only Transformer design, consisting of a stack of encoder layers, followed by a pooling operation and a classifier head. This allows the model to dynamically weight contextual information for each token based on its similarity to other tokens in the sequence.

Each input sentence is first tokenized and mapped to a sequence of word embeddings. These embeddings are processed by multiple Transformer encoder layers. Each encoder layer contains two primary components: (1) a multi-head self-attention mechanism (i.e., the function $\fattention$ in Eq.~\ref{eq:transformer}), and (2) a feed-forward neural network (i.e., the function $\ffnn$ in Eq.~\ref{eq:transformer}), each followed by residual connections and layer normalization, as introduced in  Eq.~\ref{eq:transformer}.

After the final encoder layer, a pooling operation is applied—typically mean or [CLS] token pooling—to obtain a fixed-size representation of the input. This vector is passed through a linear classifier to produce the final sentiment prediction (e.g., positive or negative).

The internal structure of the self-attention mechanism is depicted in Figure~\ref{fig:architecture}. For a given input sequence, the model computes query ($\Qmat$), key ($\Kmat$), and value ($\Vmat$) matrices via learned linear projections. The attention score is computed using the scaled dot-product between $\Qmat$ and $\Kmat$ (Eq.~\ref{eq:attention}).

\section{A Detailed Introduction to~\cite{Shi2020Robustness}}\label{sec:detailedICLR}
In CROWN-like verification frameworks, robustness is typically analyzed by computing global output bounds given input perturbations. Specifically, for each embedding dimension, the corresponding lower and upper bound functions are defined as in Eq.~\ref{func:globbound}, where $\globalLower{d}{:}$ and $\globalUpper{d}{:}$ denotes the lower and the upper bound for the embedding of the $d$-th input words vector separately. In this formulation, $\ablinearw{0}{l}$ represents the cumulative weight after abstraction from $0$-th to $l$-th layer associated with this particular bound, and $\ablinearb{0}{l}{d}$ is the bias term. Here, $\highXEle{d,:}$ denotes the $d$-th original embedding vector from the clean (unperturbed) input, while $\epsilon$ is the radius of perturbation. The term $\norm[q]{\cdot}$ denotes the dual norm with respect to the perturbation norm $p$, satisfying the relation $1/p + 1/q = 1$.

\begin{equation}
	\begin{aligned}
		\globalLower{d}{:} = -\epsilon \norm[q]{\ablinearw{0}{l}} + \ablinearw{0}{l}\highXEle{d,:} + \ablinearb{0}{l}{d},~~~~
		\globalUpper{d}{:} = \epsilon \norm[q]{\ablinearw{0}{l}} + \ablinearw{0}{l}\highXEle{d,:} + \ablinearb{0}{l}{d}
		\label{func:globbound}
	\end{aligned}
\end{equation}

\subsection{Transformation for Linear Mapping}
Consider a linear affine mapping in a neural network for the $d$-th neuron vector $\neuronvec{l}{d}$ connecting the last neuron vector $\neuronvec{l-1}{d}$, from $(l-1)$-th layer to the $l$-th layer. Formally, the affine layer of the neural network is defined as:
\begin{equation}
	\neuronvec{l}{d} = \neuronw{l-1}\neuronvec{l-1}{d}+\neuronb{l-1}{d}
	\label{func:linearfunc}
\end{equation}
Here, $\neuronw{l-1}$ and $\neuronb{l-1}{d}$ denote the weight matrix and bias vector of the trained neural network under consideration, respectively.
We now switch from the network forward computation to the affine abstraction used for verification.
To propagate an existing abstracted linear transformation Eq.~\ref{func:lineartrans} backward from $(l+1)$-th to $(l)$-th layer, through Eq.~\ref{func:linearfunc}, we substitute the $\neuronvec{l}{i}$ into the transformation expression.
\begin{equation}
	\neuronvec{l+1}{d} = \ablinearw{l}{l+1}\neuronvec{l}{d} + \ablinearb{l}{l+1}{d}
	\label{func:lineartrans}
\end{equation}
Specifically, the abstract transformation can be propagated in a backward method from layer $(l+1)$ to layer $(l-1)$ as follows: 
\begin{equation}
	\begin{aligned}
		\neuronvec{l+1}{d} &=\ablinearw{l-1}{l+1}\neuronvec{l-1}{d} + \ablinearb{l-1}{l+1}{d} \\
		\ablinearw{l-1}{l+1} &=\ablinearw{l}{l+1}\neuronw{l-1} \\
            \ablinearb{l-1}{l+1}{d} &= \ablinearw{l}{l+1}\neuronb{l-1}{d} + \ablinearb{l}{l+1}{d}
	\end{aligned}
	\label{func:linearprop}
\end{equation}
Eq.~\ref{func:linearprop} shows how the transformation coefficients \ablinearw{l}{l+1} and \ablinearb{l}{l+1}{d} from $(l+1)$-th to $(l)$-th layer are propagated backward to $(l-1)$-th layer using the network parameters \neuronw{\cdot} and \neuronb{\cdot}{\cdot}.

\smallskip\noindent\textit{Model parameters vs. abstract coefficients.}
We distinguish two types of coefficients throughout this appendix.
The symbols \neuronw{\cdot} and \neuronb{\cdot}{\cdot} denote the weight matrices and bias vectors of the neural network, which are fixed after training.
In contrast, the symbols \ablinearw{\cdot}{\cdot} and \ablinearb{\cdot}{\cdot}{\cdot} denote affine coefficients introduced by the verification procedure.
They are not model parameters, but abstract quantities.
These transformation coefficients are iteratively propagated backward through layers using the network parameters.

\subsection{Relaxation for Unary Nonlinear Function}
In the case of unary nonlinear operations, such as the ReLU function $\sigma(\cdot)$, the output of the $l$-th layer is obtained by applying a elemental-wise nonlinearity to the $(l-1)$-th layer:
\begin{equation}
	\neuronvec{l}{i} = \sigma(\neuronvec{l-1}{i})
	\label{func:6}
\end{equation}
To enable linear bound propagation through such nonlinear activation function, we relax the $\sigma(\cdot)$ using a pair of linear functions that tightly bound it from below and above.
Specifically, for each neuron indexed by $j$, we define a linear lower and upper relaxation:
\begin{equation}
	\begin{aligned}
		\relualLower{l-1}{l}{i}{j}\neuron{l-1}{i}{j}+\relubeLower{l-1}{l}{i}{j}\le\sigma(\neuron{l-1}{i}{j})\le\relualUpper{l-1}{l}{i}{j}\neuron{l-1}{i}{j}+\relubeUpper{l-1}{l}{i}{j}
	\end{aligned}
	\label{func:7}
\end{equation}
Here, $\relualLower{l-1}{l}{i}{j}$ and $\relualUpper{l-1}{l}{i}{j}$ are the slopes of the linear abstract transformation of lower and upper bounds, respectively, while $\relubeLower{l-1}{l}{i}{j}$ and $\relubeUpper{l-1}{l}{i}{j}$ are the corresponding intercepts.
These parameters are typically determined based on the concrete lower and upper bounds of the input to the nonlinearity, and they ensure that the relaxation tightly encloses the true nonlinear activation within a convex region.

Specially, for an upper bound, $\relualUpper{l-1}{l}{i}{j}$ is set to the slope of the line connecting maximum and minimum point in cross-zero situation. $\relubeUpper{l-1}{l}{i}{j}$ is the corresponding intercept. There is a judgment when considering lower bounds. If the negative domain is greater than the positive domain, set the slope to 0; otherwise, set it to 1. The intercept is always 0 \cite{zhang2018efficient}.

\subsection{Detailed Introduction to Abstraction for Dot-Product Operation and Softmax Function}
\cite{Shi2020Robustness} addressed the verification of dot-product operation such as $z = xy$ by introducing planar relaxations of the form $z = \alpha x + \beta y + \gamma$ to bound the nonlinearity.
In particular, \cite{Shi2020Robustness} define the lower and upper bounds as follows: $\alpha^U=u_y,~\beta^U=l_x,~\gamma^U=-l_xu_y$, $\alpha^L=l_y,~\beta^L=l_x,~\gamma^L=-l_xl_y$, when $x \in [l_x, u_x]$ and $y \in [l_y, u_y]$.

For the dot-product operation $QK^T$, where $Q = \qkvweight{Q}\highX + \qkvbias{Q}$ and $K = \qkvweight{K}\highX + \qkvbias{K}$, the dot-product between queries and keys involves two variable terms that are each affine transformations of the input $X$. \cite{Shi2020Robustness} further transform the bilinear relaxation planes into single-variable forms. This is achieved by substituting the expressions for $Q$ and $K$ into the planar bounds, thereby obtaining bounds that depend only on the input $X$, while preserving the affine structure of the relaxation.

A detailed derivation of this substitution process can be found in Section 3.3 of \cite{Shi2020Robustness}, and the process is summarized like in Eq.~\ref{func:qk}.

Given $Q\in \R^{n\times m}$ and $K\in \R^{n\times m}$, $Q\transpose{K}$ results in an $n\times n$ matrix, and each perturbed element can be represented as
\begin{math}
\sum_{j=1}^{m}q_{ij}k_{ji}
\end{math}, where $i\in \{1,\ldots, n\}$.

For each $q_{ij}k_{ji}$, we have
\begin{equation}
    \begin{aligned}
        &\alphaL q_{ij}+\betaL k_{ji}+\gammaL \le q_{ij}k_{ji} \le \alphaU q_{ij}+\betaU k_{ji}+\gammaU\\
        &=\alpha_i(\qkvweightEle{Q}{:}{j}\highXEle{i,:} + \qkvbiasEle{Q}{i}{j})+\beta_i(\qkvweightEle{K}{:}{j}\highXEle{i,:} + \qkvbiasEle{K}{i}{j})+\gamma_i\\
        &=(\alpha_i\qkvweightEle{Q}{:}{j}+\beta_i\qkvweightEle{K}{:}{j})\highXEle{i,:}+(\alpha_i\qkvbiasEle{Q}{i}{j}+\beta_i\qkvbiasEle{K}{i}{j}+\gamma_i)
\end{aligned}
\label{func:qk}
\end{equation}

For the softmax function, just like~\cite{Shi2020Robustness}, we first abstract the exponential function as a linear transformation. Then, we decompose the softmax into a numerator and a reciprocal-form denominator, allowing the reciprocal function to be further abstracted. Ultimately, the formulation can be reduced to the dot-product problem introduced earlier.

\section{Proof for Lemma~\ref{lem:dualBounds}}\label{sec:lemmaProof}
\begin{proof}
We first show that the difference between the upper bound (as defined in Eq.~\ref{eq:dualBounds}) and the bilinear term $\qEle\kEle$ is not possible to be negative. The difference can be computed as follows:
\begin{align*}
\qkUpperTwo{\qEle, \kEle} - \qEle\kEle &= \kLower\qEle + \qUpper  \kEle - \qUpper \kLower - \qEle \kEle \\
&= \qEle(\kLower -\kEle) + \qUpper(\kEle - \kLower) \\
&= (\kLower - \kEle)(\qEle - \qUpper).
\end{align*}

Since $\qEle \in [\qLower, \qUpper]$ and $\kEle \in [\kLower, \kUpper]$, we have
\begin{align*}
\qEle - \qUpper \le 0, \quad \kLower - \kEle \le 0.
\end{align*}
Therefore, the product of these two non-positive terms is always non-negative:
\begin{align*}
(\kLower - \kEle)(\qEle - \qUpper) \ge 0.
\end{align*}

Consequently, we can derive that,
\begin{math}
\qkUpperTwo{\qEle, \kEle} - \qEle \kEle \ge 0,
\end{math},
i.e.,
\begin{math}
\qEle \cdot \kEle \le \qkUpperTwo{\qEle, \kEle}.
\end{math}

Similarly, we can prove that the difference between the lower bound (as defined in Eq.~\ref{eq:dualBounds}) and the bilinear term $\qEle\kEle$ is not possible to be positive. Therefore, Lemma~\ref{lem:dualBounds} holds.
\end{proof}

\section{Equivalence between Eq.~\ref{eq:jointU} and Eq.~\ref{eq:reluU}}
\label{sec:equal}
We show that the ReLU-based formulations for the combined bounds are mathematically equivalent to the min/max-based definitions.

Let $a := \qkUpperOne{\qEle,\kEle} $ and $b := \qkUpperTwo{\qEle,\kEle}$. The ReLU-based upper bound is defined as:
\begin{equation}
\qkUpperNonlin{\qEle,\kEle} := a - \text{ReLU}(a - b)
\end{equation}

We consider two cases:

\begin{itemize}
    \item If $a \leq b$, then $\text{ReLU}(a - b) = 0$, so $\qkUpperNonlin{\qEle,\kEle} = a =\min(a, b)$.
    \item If $a > b$, then $\text{ReLU}(a - b) = a - b$, so $\qkUpperNonlin{\qEle,\kEle} = a - (a - b) = b = \min(a, b)$.
\end{itemize}

Therefore, we have:
\begin{equation}
\qkUpperNonlin{\qEle,\kEle} = \min(\qkUpperOne{\qEle,\kEle}, \qkUpperTwo{\qEle,\kEle})
\end{equation}

Similarly, for the lower bound, let $a := \qkLowerOne{\qEle,\kEle}$ and $b := \qkLowerTwo{\qEle,\kEle}$. The ReLU-based formulation is:
\begin{equation}
\qkLowerNonlin{\qEle,\kEle} := a + \text{ReLU}(b - a)
\end{equation}

Again, we consider two cases:
\begin{itemize}
    \item If $a \geq b$, then $\text{ReLU}(b - a) = 0$, so $\qkLowerNonlin{\qEle,\kEle} = a = \max(a, b)$.
    \item If $a < b$, then $\text{ReLU}(b - a) = b - a$, so $\qkLowerNonlin{\qEle,\kEle} = a + (b - a) = b = \max(a, b)$.
\end{itemize}

Thus, we conclude:
\begin{equation}
\qkLowerNonlin{\qEle,\kEle} = \max(\qkLowerOne{\qEle,\kEle}, \qkLowerTwo{\qEle,\kEle})
\end{equation}

Hence, the ReLU-based formulations are fully equivalent to the traditional min/max formulations.

\begin{figure}[!tb]
\vspace{4em}
  \centering
  \begin{tabular}{@{}c c c@{}}

    \begin{minipage}[t]{0.50\textwidth}
    \centering
      \small
      \setlength{\tabcolsep}{4pt}
      \renewcommand{\arraystretch}{1.05}
		\begin{tabular}{lccc}
			\toprule
			Dataset & Task      & Train   & Test   \\
			\midrule
            			\yelp    & Polarity  & 560,000 & 38,000 \\
			\sst     & Sentiment & 67,349  & 1,821  \\[2pt]

			\bottomrule
		\end{tabular}
        \captionof{table}{Dataset statistics.}
        \label{table:datasets}
    \end{minipage}


    \multirow[t]{2}{*}{%
      \begin{minipage}[t]{0.45\textwidth}
        \centering
        \setlength{\tabcolsep}{4pt}
        \renewcommand{\arraystretch}{1.2}
        \begin{tabular}{crr}
				\toprule
				 \multirow{2}*{\modelLayer} &  \multicolumn{2}{c}{accuracy}\\
                \cline{2-3}
                                  ~&  \sst  &    \yelp \\
                
				\midrule
                1 & 0.826                           & 0.912                       \\
				2 & 0.832                           &  0.912                     \\
				3 & 0.823                           & 0.913                      \\ 
                6 & 0.831                           & 0.914                    \\ 
                \cline{1-3}
                {\scriptsize\textit{TinyBERT}} & 0.809            & 0.915        \\ 
				\bottomrule
			\end{tabular}
        \captionof{table}{The accuracy of two benchmarks.}
        \label{table:accuracy}
      \end{minipage}
    }


  \end{tabular}
  \vspace{2em}
\end{figure}

\section{Extended Experiment Settings}\label{sec:extendedExperimentSettings}
We give more details about the datasets and model configurations used in our experiments. Table~\ref{table:datasets} presents statistics for two benchmark datasets: 
\begin{itemize}
\item \textbf{Yelp Polarity}~\cite{socher2013recursive}:
A large-scale binary sentiment classification dataset constructed from Yelp reviews.
\begin{itemize}
\item Each review is labeled as either \textit{positive} or \textit{negative}, based on the original star rating.
\item Contains \textbf{560,000} training samples and \textbf{38,000} test samples.
\item Designed to evaluate robust sentiment classification at scale.
\end{itemize}

\item \textbf{SST (Stanford Sentiment Treebank)}~\cite{zhang2015character}:
A widely-used benchmark dataset based on movie reviews from Rotten Tomatoes.
\begin{itemize}
\item We use the \textit{binary version} (SST), which labels each sentence as either \textit{positive} or \textit{negative}.
\item Contains \textbf{67,349} training samples, \textbf{872} validation samples, and \textbf{1,821} test samples.
\item Known for its phrase-level annotations and grammatical structure, making it suitable for fine-grained sentiment analysis.
\end{itemize}
\end{itemize}

Each dataset is associated with a sentiment classification task, and we report the number of training and test instances.

Table~\ref{table:accuracy} details the accuracy of the models we adopted in our experiments. These accuracy data show that the models we adopted in our experiments are well-trained models with reasonable accuracy, so they can be used for classification tasks. However, their robustness levels are not known beforehand, so we apply our approaches to pursue the maximal $\epsilon$ for which the models are still robust.


\begin{algorithm}[!tb]
\caption{Binary Search for Safety Verification}
\label{alg:ar1}
\KwIn{$\highX$, $num\_iters$}
\KwOut{The maximum safe radius $\epsilon$}
$min\_eps \gets 0$\;
$max\_eps \gets 0.01$\;
\While{$\texttt{verify\_safety}($\highX$, max\_eps)$}{
$min\_eps \gets max\_eps$\;
$max\_eps \gets max\_eps \times 2$\;
}
\For{$i \gets 1$ \KwTo $num\_iters$}{
    $m \gets (min\_eps + max\_eps)/2$\;
    \If{$\texttt{verify\_safety}($\highX$, m)$}{
            $min\_eps \gets m$\;
        
        }
        \Else{
            $max\_eps \gets m$\;
        }
}
$\epsilon \gets min\_eps$\;
\Return $\epsilon$\;
\end{algorithm}

\section{Binary Search}\label{sec:binarySearch}
As mentioned in Section~\ref{sec:problemStatement}, an important application of verification is to find the maximum certified safe radius $\epsilon$, under perturbing a given input $\highX$ is provably safe. There could be different ways to search for such $\epsilon$. In our experiments, we adopt a natural and common method following~\cite{zhang2018efficient,Shi2020Robustness}, which involves a binary search method.

Algorithm~\ref{alg:ar1} outlines the procedure for searching such $\epsilon$,  under a fixed verification method. The algorithm begins by initializing the minimum and maximum of $\epsilon$ as $0$ and $0.01$, respectively. It then exponentially increases the upper bound until the safety condition $\texttt{verify\_safety}(\highX, \epsilon)$ fails, thereby identifying an interval $[\texttt{min\_eps}, \texttt{max\_eps}]$ that contains the boundary of safety.

Subsequently, a binary search is conducted within this interval for a fixed number of iterations (num\_iters) to refine the estimate of the maximum safe $\epsilon$. At each step, the midpoint $m$ is checked using the verification oracle. If $m$ satisfies the safety property, it becomes the new minimum value; otherwise, it becomes the maximum value. After convergence, the algorithm returns $\epsilon := \texttt{min\_eps}$ as the largest radius for which the model can be formally verified to behave correctly.

\section{Horizontal Comparison with LSE~\cite{pmlr-v206-wei23c}}
\label{sec:horizontal-lse}

We additionally conduct a small-scale horizontal comparison with the LSE
softmax relaxation proposed by Wei et al.~\cite{pmlr-v206-wei23c}. The goal of
this experiment is not to position \toola and LSE as mutually exclusive
alternatives. Instead, the two techniques refine different sources of
approximation error in transformer verification. LSE directly tightens the
relaxation of the softmax operator, while \toola refines the linear relaxation
of attention dot products by fusing alternative planar bounds through
ReLU-based abstractions. Therefore, the two methods can be naturally combined
within the same bound-propagation pipeline.

To illustrate this complementarity, we evaluate a fixed-radius verification
task on the 3-layer SST model. Unlike the main certified-radius experiments,
here we fix the input perturbation radius and compare the resulting lower bound
on the classification margin. A larger margin lower bound indicates a tighter
verification result for the same robustness task. We use the same examples,
token positions, perturbation norm, and radius for all methods. To compare the
methods on exactly matched verification tasks, we report the average normalized
margin ratio
\[
    \frac{1}{n}\sum_{i=1}^{n}
    \left(1 + \frac{m_i - m_i^{\mathrm{base}}}{|m_i^{\mathrm{base}}|}\right),
\]
where \(m_i\) is the \marginLow defined in Eq~\ref{eq:margin} of a method on task \(i\), and
\(m_i^{\mathrm{base}}\) is the corresponding baseline margin. Under this
normalization, the baseline ratio is \(1.0\), and larger values indicate tighter
margins on the same tasks. For o-BuFFeT and the combined method, we optimize
the ReLU relaxation parameters with Adam using zero initialization, learning
rate 0.05, and 100 optimization steps.

\begin{table}[!tb]
\centering
\caption{Horizontal comparison with LSE on the 3-layer SST model at fixed
perturbation radius. We report the average normalized margin ratio over matched
verification tasks; higher is better. The LSE relaxation and \toola refine
different components of attention verification, and their combination achieves
the largest relative margin improvement.}
\label{table:horizontal-lse}
\setlength{\tabcolsep}{5pt}
\resizebox{\linewidth}{!}{
      \begin{tabular}{lcc}
            \toprule
            method & average margin ratio & relative gain \\
            \midrule
            baseline & 1.00000 & 0.00\% \\
            wei\_lse & 1.01182 & 1.182\% \\
            \toola & 1.00463 & 0.463\% \\
            wei\_lse+\toola & 1.01610 & 1.610\% \\
            \bottomrule
        \end{tabular}
}
\end{table}

As shown in Table~\ref{table:horizontal-lse}, the LSE relaxation improves the
normalized margin ratio over the baseline, and \toola also improves the
ratio when its optimization parameters are tuned for this fixed-radius task.
More importantly, combining LSE with \toola further improves the ratio beyond
either method alone. This supports our view that LSE-style softmax relaxations
and \toola are complementary: one targets the softmax component, while the other
targets the attention dot-product component. Therefore, users can consider the two lines of relaxation as complementary tools, and that combining softmax-specific relaxations with dot-product-specific refinement can provide a strictly stronger verification pipeline in practice.


\end{document}